\theoremstyle{remark}
\newcommand{\px}{\mathrm{x}}
\definecolor{gray}{gray}{0.9}    
\theoremstyle{plain}
\newtheorem{theorem}{Theorem}[section]
\theoremstyle{definition}
\theoremstyle{remark}
\icmltitlerunning{Positive–Unlabeled Reinforcement Learning Distillation for On-Premise Small Models}
\begin{document}

\setcounter{tocdepth}{3}
\twocolumn[
  \icmltitle{Positive–Unlabeled Reinforcement Learning Distillation\\ for On-Premise Small Models
}



  \icmlsetsymbol{equal}{*}

  \begin{icmlauthorlist}
    \icmlauthor{Zhiqiang Kou}{equal,1,2}
    \icmlauthor{Junyang Chen}{equal,2}
    \icmlauthor{Xin-Qiang Cai}{3}
    \icmlauthor{Xiaobo Xia}{4}
    \icmlauthor{Ming-Kun Xie}{3}
    \icmlauthor{Dong-Dong Wu}{3}\\
    \icmlauthor{Biao Liu}{2}
    \icmlauthor{Yuheng Jia}{2}
    \icmlauthor{Xin Geng}{2}
    \icmlauthor{Masashi Sugiyama}{3,5}
    \icmlauthor{Tat-Seng Chua}{4}
  \end{icmlauthorlist}

  \icmlaffiliation{1}{Hong Kong Polytechnic University}
  \icmlaffiliation{2}{Southeast University}
  \icmlaffiliation{3}{RIKEN AIP}
  \icmlaffiliation{4}{National University of Singapore}
  \icmlaffiliation{5}{University of Tokyo}

  \icmlcorrespondingauthor{Xiaobo Xia}{xbx@nus.edu.sg}

  \icmlkeywords{Machine Learning, ICML}

  \vskip 0.3in
]



\printAffiliationsAndNotice{}  

\begin{abstract}
Due to constraints on privacy, cost, and latency, on-premise deployment of small models is increasingly common. However, most practical pipelines stop at supervised fine-tuning~(SFT) and fail to reach the reinforcement-learning~(RL) alignment stage. The main reason is that RL alignment typically requires either expensive human preference annotation or heavy reliance on high-quality reward models with large-scale API usage and ongoing engineering maintenance, both of which are ill-suited to on-premise settings. To bridge this gap, in this paper, we propose a positive–unlabeled~(PU) RL distillation method for on-premise small-model deployment. Without human-labeled preferences or a reward model, our method distills the teacher’s preference-optimization capability from black-box generations into a locally trainable student. For each prompt, we query the teacher once to obtain an anchor response, locally sample multiple student candidates, and perform anchor-conditioned self-ranking to induce pairwise or listwise preferences, enabling a fully local training loop via direct preference optimization or group relative policy optimization. Theoretical analysis justifies that the induced preference signal by our method is order-consistent and concentrates on near-optimal candidates, supporting its stability for preference optimization. Experiments demonstrate that our method achieves consistently strong performance under a low-cost setting. 
\end{abstract}

\section{Introduction}
Reinforcement learning~(RL) has become a key mechanism for aligning large generative models, as preference-based paradigms such as reinforcement learning from human feedback~(RLHF)~\cite{SFT}, direct preference optimization~(DPO)~\cite{rafailov2023direct}, and group relative policy optimization (GRPO)~\cite{deepseekai2024deepseekv3,guo2025deepseek} can improve behaviors beyond static demonstrations. However, many real-world deployments prefer on-premise small expert models due to privacy, cost, latency, and compliance constraints~\cite{touvron2023llama, gunasekar2023phi}. In principle, such models follow the standard recipe of downstream supervised fine-tuning (SFT) adaptation followed by preference-based RL alignment~\cite{zhou2023lima}. Unfortunately, in practice, the RL stage is often infeasible on-premise: sustainable human preference labeling is costly, training and maintaining a reliable reward model is non-trivial, and high-quality alignment data are generally unavailable~\cite{casper2023open}. While cloud providers may accumulate large-scale instruction and preference corpora, these data cannot be shared due to proprietary and regulatory constraints~\cite{li2023privacy}. As a result, on-premise expert models typically stop at SFT and lack RL-driven self-improvement capability.

\begin{figure*}[!htb]
    \centering
    \includegraphics[width=0.90\linewidth]{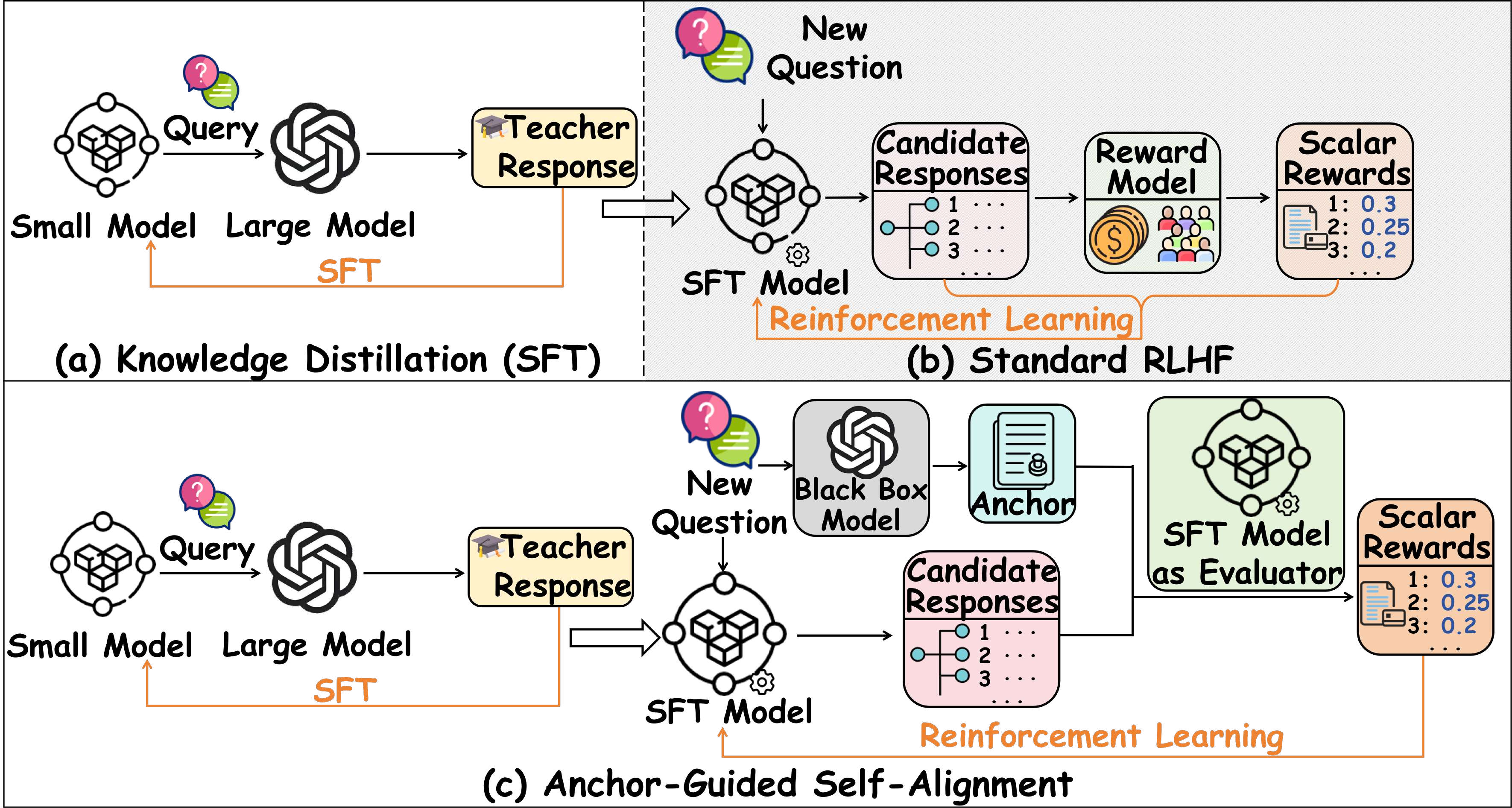}
   \caption{\textbf{Comparison among SFT, RLHF, and anchor-guided self-alignment~(ours).} For on-premise small-model deployment, training pipelines typically stop at the first-stage SFT: the small model imitates black-box teacher responses but fails to acquire preference-optimization–driven self-improvement. The second-stage RL alignment is hard to realize locally because it relies on large-scale human annotations or extensive reward-model calls. We propose a low-cost on-premise alignment strategy: for each query, we call the black-box teacher once to obtain an anchor and use anchor-guided local sampling and self-evaluation to generate scalar signals, enabling a practical SFT-to-RL loop under strict cost and deployment constraints.}
    \label{fig:placeholder}
\end{figure*}

As illustrated in Figure~\ref{fig:placeholder}, it is natural to leverage a stronger foundation model to guide an on-premise small model. Nevertheless, existing solutions largely fall into two paradigms, both with clear limitations. The first paradigm is \textit{imitation-based distillation} \cite{IKD1, IKD2}, shown in Figure~\ref{fig:placeholder}(a), where the teacher provides responses (or logits) and the student is trained via SFT to mimic these demonstrations. While effective at transferring task competence, this paradigm relies on static examples and fails to impart preference-optimization–driven self-improvement. The second paradigm is \textit{teacher-as-judge} RL \cite{selfreward1, rlaif2}, as shown in Figure~\ref{fig:placeholder}(b). For each prompt, the student samples multiple candidate responses, and the teacher or an auxiliary reward model scores or ranks all candidates to produce scalar rewards for RL. Although this paradigm reduces explicit human annotation, it is cost-prohibitive and throughput-limited in on-premise settings. In particular, with $N$ prompts and $K$ candidates per prompt, the judge must evaluate approximately $NK$ responses, resulting in $\mathcal{O}(NK)$ teacher calls and largely increased token consumption and latency variance.

Based on the above observations, we raise a more fundamental question: can we distill the \textit{preference-optimization} capability of RL from a black-box large model into an on-premise small model \textit{without} human preference labels or a reward model? We formalize this as \textit{Reinforcement Learning Capability Distillation} (RLCD), where the teacher model is only accessible via black-box generation while the student is white-box and can be locally sampled and updated. The goal is to enable \textit{low-cost} and sustainable preference-based alignment under strict on-premise constraints.

To achieve RLCD, we adopt a \textit{PU-based anchor preference induction} mechanism~\cite{pu1,pu2,garg2021mixture}. For each prompt $\mathrm{x}$, we query the teacher once to obtain a high-quality response $\mathrm{y}^{+}$ as an implicit \emph{positive anchor}, and locally sample a set of student candidates $\{\mathrm{y}_k\}_{k=1}^{K}$, treated as \emph{unlabeled} data that may include both latent positives and inferior negatives. Conditioned on the anchor, the student performs \emph{self-ranking} over its own candidates to induce relative preference relations, which are directly used by preference-optimization objectives such as DPO and GRPO. This forms a fully local ``sampling–comparison–update'' RL loop. Crucially, the anchor is \emph{not} a judge: the teacher is used only to produce the anchor and never scores or ranks student outputs. All preference signals are induced locally, reducing teacher dependence from $\mathcal{O}(NK)$ candidate evaluations to $\mathcal{O}(N)$ one-anchor-per-prompt queries and eliminating the need for a reward model, thereby meeting on-premise constraints on cost, throughput, and compliance.

We provide both theoretical and empirical support for our method. On the theoretical side, we show that the anchor-induced PU preference signal yields an order-consistent and well-concentrated listwise supervision, which justifies its use for stable preference optimization. Empirically, we evaluate our method on representative on-premise expert tasks across unimodal and multimodal settings, including writing, mathematical reasoning, and image captioning. Under the same or even lower teacher-query budget, our method consistently outperforms SFT, output distillation, and teacher-as-judge baselines. Together, these results demonstrate that, using only black-box teacher generations and without human preferences or a reward model, it is possible to distill the RL capability into small on-premise models, enabling a practical local alignment pipeline from SFT to RL. Our contributions are summarized as follows:
\begin{itemize}
    \item We formulate RLCD, which studies how to distill preference-optimization–driven RL capability from a black-box large model into an on-premise small model without human preferences or reward models.
    \item We propose two PU-based anchor preference induction methods that enable fully local alignment, which reduces teacher dependence effectively without requiring reward-model training.
    \item Theoretical guarantees are provided to show that the induced listwise preference supervision by our method can be order-consistent and concentrates on near-optimal candidates.
    \item Empirical results are offered to demonstrate consistent improvements over strong baselines across unimodal and multimodal on-premise tasks under the same or lower teacher-query budget.
\end{itemize}

\section{Related  Work}
In this section, we review recent literature related to this work, including preference-based alignment, knowledge distillation~(KD) for on-premise language models, and self-rewarded reinforcement learning.

\subsection{Preference-Based Alignment}
Reinforcement learning from human feedback (RLHF) has emerged as a dominant paradigm for aligning large language models with human intent and preferences~\cite{SFT}. A canonical RLHF pipeline typically combines supervised fine-tuning (SFT) on curated instruction–response data with preference optimization based on human-annotated comparisons, often implemented through a learned reward model and RL algorithms such as proximal policy optimization (PPO)~\cite{PPO}
Subsequent studies have proposed more stable and simplified alternatives that directly optimize preference objectives~\cite{rafailov2023direct}, including reward-model-free formulations that bypass explicit reinforcement learning while still relying on preference pairs~\cite{ethayarajh2024kto}. Despite their effectiveness, most RLHF-style methods implicitly rely on the availability of at least one of the following: (i) curated instruction–answer datasets, (ii) human-annotated preference comparisons, or (iii) a reliable reward model. These assumptions are frequently violated in on-premise deployment settings, where privacy constraints and domain specificity restrict data sharing, and where constructing a high-quality reward model is costly, impractical, or even infeasible~\cite{yuan2024self}.


\subsection{KD for On-Premise Language Models}

Knowledge distillation~(KD) \cite{kd1, kd2, kd3} transfers capabilities from a strong teacher model to a weaker student by using the teacher’s outputs as supervision, and has been widely adopted to compress large models into deployable ones~\cite{hinton2015distilling, kim2016sequence,zhang2019your,phuong2019towards,yang2023knowledge,park2019relational,xiang2025dkdm,yu2025temporal,wei2025open}. In the context of large language models, a prevalent practice is instruction distillation, where a powerful teacher is queried to generate high-quality responses for a pool of prompts, and the student is subsequently trained via SFT to imitate these teacher-generated outputs~\cite{sun2023instruction,wang2022self, hsieh2023distilling,li2023prompt}. Such strategies are particularly attractive for on-premise deployment, as they avoid direct exposure of private data and can operate with black-box access to hosted teacher models. However, most prior distillation methods primarily focus on transferring static knowledge or single-shot response quality, by matching the teacher’s outputs on individual prompts~\citep{SFT}. As a result, they largely overlook the distillation of the teacher’s preference-optimization or self-improvement capabilities~\cite{tunstall2023zephyr}, which are typically acquired through alignment training.


\subsection{Self-Rewarded Reinforcement Learning}
A complementary line of research explored how language models can improve generation quality with minimal external supervision, \textit{e.g.}, through self-critique, self-refinement, or iterative bootstrapping \cite{madaan2023self,zelikman2022star,tian2024toward,yin2025godel,shridhar2024art,he2025self}. Related works further attempted to replace human feedback with model-generated signals, including self-rewarding \cite{selfreward1} and self-judging schemes \cite{rlaif2} in which a model evaluates its own candidates and uses the resulting feedback for policy improvement~\cite{yuan2024self, bai2022constitutional, zheng2023judging,thawakar2025evolmm,zhou2024calibrated}. Despite their promise, purely self-driven signals are often unstable and prone to drift~\cite{shumailov2023curse}, whereas teacher-as-judge pipelines can be computationally expensive because they require scoring each candidate individually. In contrast, our method adopts a single teacher anchor as a high-confidence positive exemplar and elicits a groupwise soft preference distribution via in-context self-judging. This design enables iterative preference optimization using only local computation, without relying on explicit preference pairs or a learned reward model.


\section{Method}\label{sec:method}
In this section, we first offer some background knowledge and then introduce our method step by step. Finally, theoretical results are provided to justify our claims.
\subsection{Preliminaries}
\textbf{Common on-premise alignment pipeline.}
In an ideal on-premise deployment, one starts from a general base model
$p_{\text{base}}(\cdot | \mathrm{x})$ and performs supervised fine-tuning (SFT)
on local SFT pairs
$\mathcal{D}_{\text{sft}}=\{(\mathrm{x},\mathrm{y})\}$
to obtain an instruction-following model
$p_{\text{sft}}(\cdot | \mathrm{x})$.
Afterward, preference-based reinforcement learning
(\textit{e.g.}, RLHF or DPO) is applied using human-labeled preference tuples
$\mathcal{D}_{\text{pref}}=\{(\mathrm{x},\mathrm{y}^{+},\mathrm{y}^{-})\}$
to produce the final aligned model
$p_{\text{rl}}(\cdot | \mathrm{x})$.
Here, $\mathrm{y}^{+}$ and $\mathrm{y}^{-}$ denote the preferred and non-preferred
responses for the same prompt $\mathrm{x}$, respectively.

\textbf{Overall objective.}
Given an unlabeled query pool $\mathcal{D}_{\mathrm{x}}$ and black-box access to a
teacher model $T$, our goal is to learn an on-premise student model
$p_{\text{rl}}(\cdot| \mathrm{x})$ that acquires reinforcement-learning
capability under a strictly limited teacher-query budget.
Specifically, we aim to:
(i) query the teacher at most once per prompt to obtain an anchor response
$\mathrm{a}=T(\mathrm{x})$;
(ii) perform all subsequent exploration, comparison, and parameter updates
locally using student-sampled candidates
$U(\mathrm{x})=\{\mathrm{y}_k\}_{k=1}^{K}$;
and (iii) optimize the student via preference-based updates, without relying on human preference pairs or an explicit reward
model.
Below, we introduce the proposed method step by step.
The algorithmic flow is provided in Algorithm~\ref{a1}.

\subsection{Stage I: Teacher-Guided SFT}
Although our ultimate goal is to distill reinforcement-learning capability, we first warm-start the on-premise student to ensure basic instruction following
and a reasonable generation prior.
This setting follows the standard cold-start practice adopted in prior work, \textit{e.g.},~\cite{guo2025deepseek}. Specifically, given an unlabeled query pool $\mathcal{D}_{\mathrm{x}}$, we query the black-box teacher model $T$ once per prompt to obtain a response
$\mathrm{y}=T(\mathrm{x})$ and construct an SFT dataset
$\mathcal{D}_{\text{sft}}=\{(\mathrm{x}, \mathrm{y}) | \mathrm{x}\in\mathcal{D}_{\mathrm{x}}\}$.
We then fine-tune the on-premise student, starting from a base model
$p_{\text{base}}(\cdot|\mathrm{x})$ by minimizing the standard
maximum-likelihood objective:
\begin{equation}
\label{eq:tg_sft}
\min_{\theta}\ 
\mathbb{E}_{(\mathrm{x},\mathrm{y})\sim \mathcal{D}_{\text{sft}}}
\big[-\log p_\theta(\mathrm{y}|\mathrm{x})\big].
\end{equation}
This stage corresponds to conventional SFT using
teacher-generated responses as training targets, without introducing any
preference signals, ranking supervision, or reinforcement-learning updates.
The resulting model, denoted as $p_{\text{sft}}(\cdot| \mathrm{x})$,
serves as a well-initialized starting point for Stage~II, where the student
is further improved through preference-based optimization rather than
direct imitation.

\subsection{Stage II: Reinforcement Learning Capability Distillation~(RLCD)}

\textbf{Anchor-conditioned PU self-evaluation.} Stage~II adopts a positive-unlabeled (PU) formulation~\cite{pu1,pu2} tailored to low-cost on-premise alignment. For each query $\mathrm{x}$, we associate each response with a latent quality indicator $\mathrm{z}\in\{0,1\}$, where $\mathrm{z}=1$ denotes a preferred response and $\mathrm{z}=0$ otherwise. As there are no ground-truth answers, human preference pairs, or reward model are available, the student’s sampled responses are treated as unlabeled. We therefore query the black-box teacher once to obtain an anchor response $\mathrm{a}$, which serves as a high-confidence \textit{positive seed} as $T(\mathrm{x})=\mathrm{a}$. The student then locally samples $K$ candidates $\{\mathrm{y}_k\}_{k=1}^{K}$ with $\mathrm{y}_k\sim p_\theta(\cdot|\mathrm{x})$, forming an unlabeled set $U(\mathrm{x})=\{\mathrm{y}_k\}_{k=1}^{K}$ that contain good and bad responses and may contain \textit{latent positives}.

To transform the single positive seed into a dense training signal over
$U(\mathrm{x})$, we adopt \textit{PU in-context inference}.
The anchor $\mathrm{a}$ serves as an in-context reference for both correctness
and style, enabling the student to \textit{self-evaluate} its sampled candidates
without external supervision.
Specifically, for each $\mathrm{y}_k \in U(\mathrm{x})$, the student produces a
scalar utility score $s_k = g_{\theta}(\mathrm{x}, \mathrm{a}, \mathrm{y}_k)$,
where $g_{\theta}$ is a scoring function induced by the SFT-initialized student
model $p_{\text{sft}}(\cdot | \mathrm{x})$, and only the relative magnitudes
of $\{s_k\}_{k=1}^{K}$ will be used later.
The resulting scores induce a listwise ranking over $U(\mathrm{x})$ purely from
local computation, without requiring an external judge or an explicit reward
model.

A naive reranking that treats low-score candidates as negatives is unreliable under the PU setting, as $U(\mathrm{x})$ may contain alternative correct responses. We therefore calibrate scores using the positive seed itself. Let $s^* = g_{\theta}(\mathrm{x}, \mathrm{a}, \mathrm{a})$ and define the anchor-referenced margin $r_k:=s_k-s^*$. We map this margin to a soft positivity confidence $u_k = \sigma(\gamma r_k)$, where $\gamma>0$ is a softness parameter and $\sigma$ denotes the Sigmoid activation function. A PU-aware label distribution over the group is then constructed as follows:
\begin{equation}
w_k \propto u_k \exp(r_k / \tau), 
D_\mathrm{x}(k) = \frac{w_k}{\sum_{j=1}^{K} w_j} \in \Delta^{K-1},
\end{equation}
where $\tau$ is a temperature parameter and $\Delta$ denotes the probability simplex. This formulation allows a single teacher-provided anchor to induce a dense soft preference distribution $D_\mathrm{x}$ over $U(\mathrm{x})$ via PU-aware in-context self-evaluation. Given the induced soft preference distribution $D_{\px}\in\Delta^{K-1}$ over $U(\mathrm{x})$, we instantiate RLCD via label distribution learning~(LDL) and propose \textit{LDL-GRPO}, which aligns the group-level policy allocation on $U(\mathrm{x})$ with $D_{\px}$ through distribution matching. We next introduce the \textit{LDL-GRPO} formulation.

\textbf{LDL-GRPO.}
Given the induced soft preference label distribution
$D_\mathrm{x}\in\Delta^{K-1}$ over the candidate set
$\{\mathrm{y}_k\}_{k=1}^{K}$, our goal is to update the on-premise student
such that, within this group, it allocates probability mass in accordance
with $D_\mathrm{x}$. This realizes preference optimization \emph{without} an external reward model: the training signal is the distributional
supervision $D_\mathrm{x}$ derived from anchor-conditioned in-context
self-evaluation. Since $D_\mathrm{x}$ is defined on candidate indices, we first map the
student policy to the same space by normalizing its
\emph{unnormalized sequence likelihoods} over the sampled group:
\begin{equation}
\label{eq:qtheta}
\begin{aligned}
q_\theta(\mathrm{y}_k|\mathrm{x})
&:=\frac{\exp\!\big(\log p_\theta(\mathrm{y}_k|\mathrm{x})\big)}
{\sum_{j=1}^{K}\exp\!\big(\log p_\theta(\mathrm{y}_j|\mathrm{x})\big)}
=\frac{p_\theta(\mathrm{y}_k|\mathrm{x})}
{\sum_{j=1}^{K}p_\theta(\mathrm{y}_j|\mathrm{x})}.
\end{aligned}
\end{equation}

In practice, $q_\theta$ is computed via sequence log-likelihoods for
numerical stability.
We then minimize the divergence between the target label distribution
$D_\mathrm{x}$ and the candidate-normalized policy distribution
$q_\theta(\cdot|\mathrm{x})$, while constraining the updated policy to stay
close to a fixed reference model $p_{\text{sft}}$.
The resulting \textit{LDL-GRPO} objective is
\begin{equation}
\label{eq:grpo_ldl}
\begin{aligned}
\min_{\theta}\quad
&\mathbb{E}_{\mathrm{x}}
\!\left[
\mathrm{KL}\!\left(D_\mathrm{x} \,\|\, q_\theta(\cdot|\mathrm{x})\right)
\right]
+
\beta\,
\mathbb{E}_{\mathrm{x}}
\!\left[
\mathrm{KL}\!\left(p_\theta(\cdot|\mathrm{x})
\,\|\,p_{\text{sft}}(\cdot|\mathrm{x})\right)
\right].
\end{aligned}
\end{equation}
where $\beta>0$ is a hyperparameter to balance the two loss terms. Since
$\mathrm{KL}(D_\mathrm{x}\|q_\theta)
=\sum_{k} D_\mathrm{x}(k)\log D_\mathrm{x}(k)
-\sum_k D_\mathrm{x}(k)\log q_\theta(\mathrm{y}_k|\mathrm{x})$,
the first term in Eq.~\eqref{eq:grpo_ldl} is equivalent, up to an additive
constant independent of $\theta$, to minimizing the cross entropy
$\sum_k D_\mathrm{x}(k)\big(-\log q_\theta(\mathrm{y}_k|\mathrm{x})\big)$.
That is,
\begin{equation}
\label{eq:grpo_ldl_expand}
\begin{aligned}
\min_{\theta}\quad
&\mathbb{E}_{\mathrm{x}}
\!\left[
-\sum_{k=1}^{K} D_\mathrm{x}(k)\,\log p_\theta(\mathrm{y}_k|\mathrm{x})
+\log \sum_{j=1}^{K} p_\theta(\mathrm{y}_j|\mathrm{x})
\right]
\\
&+
\beta\,
\mathbb{E}_{\mathrm{x}}
\!\left[
\mathrm{KL}\!\left(p_\theta(\cdot|\mathrm{x})
\,\|\,p_{\text{sft}}(\cdot|\mathrm{x})\right)
\right].
\end{aligned}
\end{equation}

This form makes clear that \textit{LDL-GRPO} increases the likelihood of
candidates weighted by $D_\mathrm{x}$, while accounting for the
group-wise normalization.
Overall, \textit{LDL-GRPO} implements a fully local loop:
sample a candidate group $\rightarrow$ induce a PU-aware soft label
distribution via anchor-conditioned self-evaluation $\rightarrow$ update
the policy by matching its group-wise probability allocation to
$D_\mathrm{x}$, requiring neither human preference pairs nor a reward model.

\begin{algorithm}[tb]
\caption{Pseudo-Code for \textit{LDL-GRPO}}
\label{a1}
\begin{algorithmic}[1]
\STATE \textbf{Input:} unlabeled prompt pool $\mathcal{D}_{\mathrm{x}}$;
black-box teacher $T$;
student policy $p_\theta$ (initialized from $p_{\text{sft}}$);
frozen reference $p_{\text{sft}}$;
\#candidates $K$;
temperatures $\gamma,\tau$;
KL weight $\beta$.
\STATE \textbf{Output:} aligned student $p_\theta$.
\WHILE{not converged}
    \STATE Sample mini-batch $\{\mathrm{x}_i\}_{i=1}^{B} \sim \mathcal{D}_{\mathrm{x}}$.
    \FOR{$i=1$ {\bf to} $B$}
        \STATE \textit{One-shot teacher response:}
        $\mathrm{a}_i \leftarrow T(\mathrm{x}_i)$.
        \STATE \textit{Local sampling:}
        draw $K$ candidates $\mathrm{y}_{i,k}\sim p_\theta(\cdot| \mathrm{x}_i)$.
        \STATE $s^*_i \leftarrow g_\theta(\mathrm{x}_i,\mathrm{a}_i,\mathrm{a}_i)$.
        \FOR{$k=1$ {\bf to} $K$}
            \STATE $r_{i,k}\leftarrow
            g_\theta(\mathrm{x}_i,\mathrm{a}_i,\mathrm{y}_{i,k})-s^*_i$.
            \STATE $\tilde{D}_i(k)\leftarrow
            \sigma(\gamma r_{i,k})\exp(r_{i,k}/\tau)$.
            \STATE $\tilde{q}_i(k)\leftarrow
            p_\theta(\mathrm{y}_{i,k}| \mathrm{x}_i)$.
        \ENDFOR
        \STATE $D_i \leftarrow \mathrm{Normalize}(\tilde{D}_i)$.
        \STATE $q_i \leftarrow \mathrm{Normalize}(\tilde{q}_i)$.
    \ENDFOR
    \STATE Update $\theta$ by minimizing
    $\frac{1}{B}\sum_{i=1}^{B}
    \mathrm{KL}(D_i\|q_i)+\beta\frac{1}{B}\sum_{i=1}^{B}
    \mathrm{KL}(p_\theta(\cdot| \mathrm{x}_i)\|p_{\text{sft}}(\cdot| \mathrm{x}_i))$~(\textit{cf.}, Eq.~(\ref{eq:grpo_ldl})).
\ENDWHILE
\end{algorithmic}
\end{algorithm}

\subsection{Theoretical Justification}
We provide the following two theoretical results to support our method. The first theorem ensures that $D_\mathrm{x}$ is a strictly order-preserving listwise preference signal.
The second theorem shows that $D_\mathrm{x}$ concentrates on near-best candidates within the sampled set,
with a gap controlled by two factors.
These properties justify using $D_\mathrm{x}$ as a label-distribution target for LDL-GRPO update,
turning anchor-seeded PU preference induction into stable group-level supervision without
explicit rewards or external judging.

\begin{theorem}[Order consistency]
\label{thm:order_consistency}
For any $i,j\in\{1,\dots,K\}$, $r_i>r_j$ implies $D_\mathrm{x}(i)>D_\mathrm{x}(j)$.
\end{theorem}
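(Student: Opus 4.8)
The plan is to reduce the claim about the normalized distribution $D_\mathrm{x}$ to a statement about the unnormalized weights $w_k$, and then to show that the map carrying a margin to its weight is strictly increasing. First I would observe that, by definition, $D_\mathrm{x}(k) = w_k / Z$ with $Z := \sum_{j=1}^{K} w_j$, and that $Z$ is a strictly positive constant independent of the index $k$ (each $w_k$ is a product of two strictly positive factors, so $w_k>0$ and hence $Z>0$). Consequently, for any $i,j$, we have $D_\mathrm{x}(i) > D_\mathrm{x}(j)$ if and only if $w_i > w_j$. This collapses the problem to analyzing the monotonicity of a single unnormalized weight.

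Next I would make the functional dependence explicit by writing $w_k = f(r_k)$ with $f(r) := \sigma(\gamma r)\,\exp(r/\tau)$, so the goal becomes showing that $f$ is strictly increasing on $\mathbb{R}$. The cleanest route is to differentiate $\log f$, which is legitimate precisely because $f(r)>0$ everywhere. Using $\sigma'(x)=\sigma(x)\big(1-\sigma(x)\big)$, I obtain
\begin{equation}
\frac{d}{dr}\log f(r) = \gamma\big(1-\sigma(\gamma r)\big) + \frac{1}{\tau}.
\end{equation}
Since $\gamma>0$, $\tau>0$, and $1-\sigma(\gamma r)\in(0,1)$ for every $r$, each summand is strictly positive, so $(\log f)'(r)>0$ for all $r$. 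Hence $\log f$, and therefore $f$ itself, is strictly increasing.

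Combining the two steps closes the argument: if $r_i>r_j$, strict monotonicity of $f$ gives $w_i=f(r_i)>f(r_j)=w_j$, which by the first step yields $D_\mathrm{x}(i)>D_\mathrm{x}(j)$, as required.

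As for obstacles, there is no step of real substance here: the statement amounts to the fact that a product of two strictly positive, strictly increasing factors—namely $\sigma(\gamma\,\cdot)$ (increasing because $\gamma>0$) and $\exp(\cdot/\tau)$ (increasing because $\tau>0$)—is itself strictly increasing. The only point requiring a line of care is the positivity of $f$ that justifies passing to the logarithm. If one prefers to avoid calculus entirely, the same conclusion follows directly by the chain of strict inequalities $f(a)=\sigma(\gamma a)e^{a/\tau}<\sigma(\gamma b)e^{a/\tau}<\sigma(\gamma b)e^{b/\tau}=f(b)$ for $a<b$, where positivity of the held-fixed factor is used at each step.
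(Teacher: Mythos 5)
Your proof is correct and follows essentially the same route as the paper's: both reduce the claim to strict monotonicity of $r \mapsto \log\sigma(\gamma r) + r/\tau$ and verify it via the derivative $\gamma\bigl(1-\sigma(\gamma r)\bigr) + 1/\tau > 0$. Your closing remark giving a calculus-free argument (product of two positive strictly increasing factors) is a nice optional simplification, but the substance is identical to the paper's proof.
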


\begin{proof}[Proof sketch]
Let $f(r)=\log\sigma(\gamma r)+r/\tau$ so that $D_\mathrm{x}(k)\propto \exp(f(r_k))$.
Since $f'(r)=\gamma(1-\sigma(\gamma r))+1/\tau>0$ for all $r$, $f$ is strictly increasing,
which preserves the ordering induced by $\{r_k\}$. Detailed proof is provided in Appendix~\ref{sec:proof_thm_3.1}. 
\end{proof}

\noindent\textbf{Remark~1.}
Theorem~\ref{thm:order_consistency} guarantees that the PU gating $\sigma(\gamma r)$ does not distort rankings:
for any fixed $\gamma>0$ and $\tau>0$, $D_\mathrm{x}$ induces exactly the same ordering as the calibrated
margins $\{r_k\}_{k=1}^K$. In particular, replacing hard pair construction by listwise supervision
with $D_\mathrm{x}$ is consistent with the student’s anchor-referenced comparisons.

\begin{theorem}[Near-optimality on the sampled set]
\label{thm:near_optimality}
Let $r_{\max}=\max_{1\le k\le K} r_k$. Then
\begin{equation}
r_{\max}-\mathbb{E}_{k\sim D_\mathrm{x}}[r_k]\le \tau\log K.
\label{eq:near_opt}
\end{equation}
\end{theorem}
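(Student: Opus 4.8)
The plan is to reduce the gated distribution $D_{\mathrm{x}}$ to a plain softmax (Gibbs) distribution and then invoke a standard entropy estimate. Recalling that the weights are $w_k = u_k\exp(r_k/\tau)$ with $u_k=\sigma(\gamma r_k)$, I would introduce the auxiliary distribution $p(k)\propto\exp(r_k/\tau)$ obtained by discarding the PU gate. The first milestone is to show that the gate can only help, i.e.\ $\mathbb{E}_{k\sim D_{\mathrm{x}}}[r_k]\ge\mathbb{E}_{k\sim p}[r_k]$, so that it suffices to control the gap for the ungated softmax.

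To establish this I would rewrite the gated expectation as a reweighting of $p$:
\[
\mathbb{E}_{k\sim D_{\mathrm{x}}}[r_k]=\frac{\mathbb{E}_{k\sim p}[u_k r_k]}{\mathbb{E}_{k\sim p}[u_k]}.
\]
Because $\sigma$ is strictly increasing (already exploited in the proof of Theorem~\ref{thm:order_consistency}), $u_k=\sigma(\gamma r_k)$ is a nondecreasing function of $r_k$, so $u$ and $r$ are comonotone. A Chebyshev-sum / FKG-type correlation inequality then gives $\mathrm{Cov}_p(u,r)\ge 0$, that is $\mathbb{E}_p[u\,r]\ge\mathbb{E}_p[u]\,\mathbb{E}_p[r]$; dividing by $\mathbb{E}_p[u]>0$ (which holds since $\sigma>0$) yields exactly $\mathbb{E}_{k\sim D_{\mathrm{x}}}[r_k]\ge\mathbb{E}_{k\sim p}[r_k]$. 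Equivalently, the nonnegative gap $r_{\max}-\mathbb{E}[r_k]$ is no larger under $D_{\mathrm{x}}$ than under $p$.

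It then remains to bound the gap for the plain softmax, where I would use the entropy identity for the Gibbs distribution. With $Z=\sum_k\exp(r_k/\tau)$ and $H(p)=-\sum_k p(k)\log p(k)$, the relation $\log p(k)=r_k/\tau-\log Z$ gives
\[
\mathbb{E}_{k\sim p}[r_k]=\tau\log Z-\tau H(p).
\]
Lower-bounding $\log Z\ge r_{\max}/\tau$ (retaining only the top term) and upper-bounding $H(p)\le\log K$ (maximum entropy on $K$ outcomes) produces $\mathbb{E}_{k\sim p}[r_k]\ge r_{\max}-\tau\log K$, i.e.\ $r_{\max}-\mathbb{E}_{k\sim p}[r_k]\le\tau\log K$. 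Chaining this with the comonotonicity step gives $r_{\max}-\mathbb{E}_{k\sim D_{\mathrm{x}}}[r_k]\le\tau\log K$, as claimed.

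The main obstacle is the gate $u_k$: a direct log-sum-exp estimate on the gated partition function $\sum_k u_k\exp(r_k/\tau)$ breaks down, because lower-bounding it by its top term introduces an uncontrolled factor $\log u_{k^\star}=\log\sigma(\gamma r_{\max})$, which can be arbitrarily negative (e.g.\ when $r_{\max}<0$ and $\gamma$ is large) and would spoil the clean $\tau\log K$ bound. The comonotonicity/covariance argument is precisely what removes this term, formalizing the intuition that monotone PU gating only shifts mass further toward the top candidate and therefore never enlarges the gap; this is also why the gating strength $\gamma$ does not appear in the final bound.
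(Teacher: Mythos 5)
Your proposal is correct and follows essentially the same route as the paper's own proof: both compare $D_{\mathrm{x}}$ to the ungated softmax $\bar D(k)\propto\exp(r_k/\tau)$, establish $\mathbb{E}_{D_{\mathrm{x}}}[r_k]\ge\mathbb{E}_{\bar D}[r_k]$ via the nonnegative covariance between $r$ and the increasing gate $\sigma(\gamma r)$, and then bound the softmax gap by $\tau\log K$ through the log-partition/entropy identity with $H(\bar D)\le\log K$ and $\log Z\ge r_{\max}/\tau$. Your explicit appeal to the Chebyshev-sum inequality and your remark on why a direct log-sum-exp bound on the gated partition function fails are slightly more detailed justifications of steps the paper states more tersely, but the argument is the same.
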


\begin{proof}[Proof sketch]
Define the softmax distribution $\bar D(k)=\exp(r_k/\tau)\big/\sum_{j=1}^{K}\exp(r_j/\tau)$.
A standard log-sum-exp bound gives $\mathbb{E}_{\bar D}[r_k]\ge r_{\max}-\tau\log K$~\cite{boyd2004convex}.
Moreover,
\[
D_\mathrm{x}(k)=\frac{\bar D(k)\,\sigma(\gamma r_k)}{\sum_{j=1}^{K}\bar D(j)\,\sigma(\gamma r_j)},
\]
which is obtained by reweighting $\bar D$ with the increasing factor $\sigma(\gamma r_k)$,
thereby shifting probability mass toward larger margins and ensuring
$\mathbb{E}_{D_\mathrm{x}}[r_k]\ge \mathbb{E}_{\bar D}[r_k]$.
Combining the two inequalities yields~\eqref{eq:near_opt}. Detailed proof is provided in Appendix~\ref{sec:proof_thm_3.3}. 
\end{proof}

\noindent\textbf{Remark~2.}
The bound in~\eqref{eq:near_opt} is controlled only by the sampling budget $K$ and temperature $\tau$:
larger $K$ makes the sampled set more likely to contain strong candidates, while smaller $\tau$
tightens concentration around high-margin responses. Importantly, the PU term $\sigma(\gamma r)$
can only improve concentration relative to the pure softmax $\bar D$, since it monotonically
upweights larger margins.

\section{Experiments}\label{sec:exp}
\subsection{Setups}
\noindent\textbf{Datasets and Models.}
We evaluate our method in both unimodal and multimodal settings.
For the unimodal setting, we consider two representative tasks: creative writing and mathematical reasoning.
For creative writing, we employ WritingPrompts~\cite{writingprompts} and report results on two variants, WritingPrompts-CW and WritingPrompts-EU.
For mathematical reasoning, we use Competition Math~\cite{competition_math} and evaluate on CompMath-Count and CompMath-Geometry.
For the multimodal setting, we evaluate vision-language understanding on A-OKVQA~\cite{A-OKVQA} under two tasks: A-OKVQA-MC, a multiple-choice visual question answering task, and A-OKVQA-Rationale, which requires generating free-form rationales grounded in the image. Additional task details and evaluation prompts are provided in Appendix~\ref{appendix:datasets}.

\begin{table*}[!t]
\centering
\renewcommand{\arraystretch}{0.95}
\small
\caption{
\textbf{Main performance comparison of our methods and various baselines.} The comparison covers six representative tasks spanning unimodal and multimodal domains.
Within each case, the best result is highlighted in bold with a gray background, and the second-best result is underlined.
}
\label{table1}
\begin{tabular}{lccccccccc}
\toprule \toprule
\multirow{3}{*}{\textbf{Method}} & \multicolumn{4}{c}{\textbf{Qwen2.5-VL-7B}} & \phantom{abc} & \multicolumn{4}{c}{\textbf{LLaMA3-8B / LLaVA-7B}} \\
\cmidrule(lr){2-5} \cmidrule(lr){7-10}
& \multicolumn{2}{c}{Raw $\uparrow$} & \multicolumn{2}{c}{LC $\uparrow$} && \multicolumn{2}{c}{Raw $\uparrow$} & \multicolumn{2}{c}{LC $\uparrow$} \\
\midrule

\rowcolor{blue!15}  \textit{Writing} & \multicolumn{4}{c}{\textit{Task 1: CFCW}} && \multicolumn{4}{c}{\textit{Task 2: PBFG}} \\ \midrule
\textit{SFT}                      & 0.538 & \underline{0.452} & 0.573 & 0.513 && 0.568 & 0.487 & 0.482 & 0.372 \\
\textit{SFT$\rightarrow$SFT}      & 0.352 & 0.337 & 0.372 & 0.327 && 0.462 & 0.387 & 0.417 & 0.312 \\
\textit{SinglePair-DPO}           & 0.492 & 0.447 & 0.477 & 0.437 && 0.533 & 0.432 & 0.683 & 0.482 \\
\textit{Anchor-GRPO}              & 0.508 & \underline{0.452} & 0.548 & 0.487 && 0.573 & 0.513 & 0.673 & 0.508 \\
\textit{Self-PPO}                 & 0.543 & 0.392 & 0.658 & 0.538 && 0.628 & \underline{0.518} & 0.764 & 0.523 \\
\textit{AnchorRank-DPO}           & \underline{0.573} & 0.417 & \underline{0.663} & \underline{0.573} && \underline{0.643} & 0.472 & \underline{0.814} & \underline{0.608} \\\midrule
\textit{LDL-GRPO}~(ours) & \cellcolor{gray!80}\textbf{0.588} & \cellcolor{gray!80}\textbf{0.553} & \cellcolor{gray!80}\textbf{0.678} & \cellcolor{gray!80}\textbf{0.583} && \cellcolor{gray!80}\textbf{0.683} & \cellcolor{gray!80}\textbf{0.543} & \cellcolor{gray!80}\textbf{0.834} & \cellcolor{gray!80}\textbf{0.623} \\
\midrule

\rowcolor{red!15}  \textit{Math} & \multicolumn{4}{c}{\textit{Task 3: CountProb}} && \multicolumn{4}{c}{\textit{Task 4: Geometry}} \\ \midrule
\textit{SFT}                      & 0.533 & 0.487 & 0.382 & 0.427 && 0.613 & 0.583 & 0.457 & 0.437 \\
\textit{SFT$\rightarrow$SFT}      & \underline{0.558} & 0.533 & 0.442 & 0.417 && \underline{0.678} & \underline{0.658} & 0.608 & 0.588 \\
\textit{SinglePair-DPO}           & 0.462 & 0.452 & 0.392 & 0.382 && 0.457 & 0.442 & 0.397 & 0.382 \\
\textit{Anchor-GRPO}              & \underline{0.558} & \underline{0.538} & 0.422 & 0.397 && 0.628 & 0.608 & 0.452 & 0.432 \\
\textit{Self-PPO}                 & 0.518 & 0.492 & 0.477 & 0.462 && 0.638 & 0.623 & 0.583 & 0.563 \\
\textit{AnchorRank-DPO}           & 0.538 & 0.492 & \underline{0.492} & \underline{0.482} && 0.633 & 0.598 & \underline{0.633} & \cellcolor{gray!80}\textbf{0.618} \\\midrule
\textit{LDL-GRPO}~(ours) & \cellcolor{gray!80}\textbf{0.578} & \cellcolor{gray!80}\textbf{0.573} & \cellcolor{gray!80}\textbf{0.508} & \cellcolor{gray!80}\textbf{0.497} && \cellcolor{gray!80}\textbf{0.724} & \cellcolor{gray!80}\textbf{0.709} & \cellcolor{gray!80}\textbf{0.638} & \underline{0.613} \\
\midrule

\rowcolor{green!15}  \textit{Multimodal} & \multicolumn{4}{c}{\textit{Task 5: A-OKVQA-MC}} && \multicolumn{4}{c}{\textit{Task 6: A-OKVQA-RG}} \\ \midrule
\textit{SFT}                      & 0.131 & 0.136 & 0.131 & 0.121 && 0.231 & 0.226 & 0.256 & 0.261 \\
\textit{SFT$\rightarrow$SFT}      & 0.075 & 0.065 & 0.106 & 0.085 && 0.075 & 0.106 & 0.181 & 0.171 \\
\textit{SinglePair-DPO}           & 0.166 & 0.171 & 0.055 & 0.050 && 0.106 & 0.101 & 0.307 & 0.276 \\
\textit{Anchor-GRPO}              & 0.146 & 0.156 & 0.090 & 0.035 && 0.151 & 0.166 & 0.357 & 0.327 \\
\textit{Self-PPO}                 & 0.206 & 0.216 & \underline{0.136} & \underline{0.126} && 0.241 & 0.231 & 0.382 & 0.362 \\
\textit{AnchorRank-DPO}           & \underline{0.216} & \underline{0.226} & 0.126 & 0.121 && \underline{0.256} & \underline{0.246} & \underline{0.412} & \underline{0.392} \\\midrule
\textit{LDL-GRPO}~(ours) & \cellcolor{gray!80}\textbf{0.236} & \cellcolor{gray!80}\textbf{0.231} & \cellcolor{gray!80}\textbf{0.146} & \cellcolor{gray!80}\textbf{0.141} && \cellcolor{gray!80}\textbf{0.261} & \cellcolor{gray!80}\textbf{0.256} & \cellcolor{gray!80}\textbf{0.432} & \cellcolor{gray!80}\textbf{0.417} \\
\bottomrule \bottomrule
\end{tabular}
\end{table*}

\noindent\textbf{Backbone Models.}
As on-premise student backbones, we exploit Qwen2.5-7B-Instruct~\cite{qwen2.5-7b-ins}
and LLaMA3-8B-Instruct~\cite{llama3-8b-ins} for unimodal tasks,
and LLaVA-7B~\cite{LLaVA-7B} and Qwen2.5-VL-7B-Instruct~\cite{qwen2.5-vl}
for multimodal tasks. Model details and access information are provided in Appendix~\ref{appendix:models}.

\noindent\textbf{Evaluation Metric.}
We conduct automatic A/B preference evaluation by comparing each baseline against
GPT-4o (the 2024-11-20 version).
Specifically, for each query, an external judge selects the better response according to task-specific criteria such as instruction following, correctness, and overall helpfulness.
For unimodal text tasks, we adopt Qwen3-235B-A22B-Instruct~\cite{qwen3-235b} as the judge model,
while for multimodal vision-language tasks, we use Qwen3-VL-235B-A22B-Instruct~\cite{qwen3-235b}. We report both the raw win rate (Raw) and the length-controlled win rate (LC).
The LC metric mitigates potential verbosity bias by enforcing comparable response lengths during evaluation.
Ties are counted as half wins for both WR and LC.
Additional details on the evaluation protocol and judge models are provided in Appendix~\ref{appendix:evaluation}.

\noindent\textbf{Baselines.}
We evaluate the following baselines that are strictly aligned with our stage-wise setting. Note that all methods share the same on-premise student backbone and the same stage-II prompts. They differ only in how supervision signals are constructed and how the student is updated.
\begin{itemize}
    \item \textit{SFT}. The on-premise model after the first-stage SFT only, with no second-stage preference optimization.
    \item \textit{SFT$\rightarrow$SFT}. Starting from the SFT model, we query the black-box teacher on the stage II prompts and continue supervised fine-tuning on the teacher outputs, replacing RL with additional imitation learning.
    \item \textit{SinglePair-DPO}~\cite{rafailov2023direct}. Starting from the SFT model, for each stage II prompt, we obtain one teacher response and one student response, form a single preference pair where the teacher response is preferred, and optimize the student with direct preference optimization.
    \item \textit{Anchor-GRPO}~\cite{guo2025deepseek}. Starting from the SFT model, for each stage II prompt, we query the teacher once to obtain an anchor response, sample multiple student candidates locally, and use the on-premise SFT model as an anchor-conditioned evaluator to score these candidates.
    The resulting scalar (group-relative) rewards are then used to optimize the student via standard group relative policy optimization.
    \item \textit{Self-PPO}~\cite{PPO}. Starting from the SFT model, we sample a single response per stage II prompt and obtain a scalar self-reward via on-premise self-evaluation. The student is then optimized with proximal policy optimization using this signal, without querying any external teacher or judge during training.
    \item \textit{AnchorRank-DPO}~\cite{rafailov2023direct}. Starting from the SFT model, we query the teacher once per stage-II prompt to obtain an anchor, sample multiple student candidates locally, and perform anchor-conditioned self-ranking to induce pairwise preferences. The student is then trained with direct preference optimization on these induced preferences.

\end{itemize}

\noindent\textbf{Implementation.}
All methods~(except for \textit{SFT}) are implemented within a unified two-stage training pipeline to ensure fair and controlled comparison.
We first perform SFT on teacher-generated responses to obtain a warm-started on-premise model that exhibits basic instruction-following ability.
This SFT checkpoint is used to initialize all stage II variants.
In stage II, all methods are trained on the same prompt set with identical decoding configurations and backbone architectures, and differ only in their optimization objectives and the construction of supervision signals.
Specifically, we consider \textit{SFT$\rightarrow$SFT}, \textit{SinglePair-DPO}, \textit{Anchor-GRPO},
\textit{Self-PPO}, \textit{AnchorRank-DPO}, and \textit{LDL-GRPO}.
For anchor-guided methods, the black-box teacher is queried exactly once per prompt to obtain an anchor response, after which all candidate sampling, self-evaluation, and policy updates are performed entirely on-premise.
This design ensures that performance differences stem from the learning objectives rather than from unequal teacher access or computational budgets.
Detailed implementation settings, training hyperparameters, and evaluation configurations are provided in Appendix~\ref{sec:supp_exp}.

\subsection{Results and Findings}
We provide the comparison between our methods and baselines in Table~\ref{table1}.  The results lead to the following findings.

\textbf{Finding 1.}  Across all six tasks and backbones, our two anchor-based methods,
\textit{AnchorRank-DPO} and \textit{LDL-GRPO}, consistently achieve the top-two performance under both Raw and LC metrics, outperforming all non-anchor baselines. This verifies the effectiveness of distilling preference optimization capability from a single teacher anchor with local exploration.

\textbf{Finding 2.} \textit{LDL-GRPO} outperforms \textit{AnchorRank-DPO} across tasks and model scales, indicating that label-distribution-based group supervision provides more robust and stable preference optimization than multi-pair DPO.

\textbf{Finding 3.} The advantages of anchor-based methods remain clear under LC evaluation, indicating that the improvements are not driven by verbosity. \textit{LDL-GRPO} achieves the highest LC scores on all writing and multimodal tasks.

\textbf{Finding 4.} \textit{SFT$\rightarrow$SFT} underperforms \textit{SFT}, while all preference-based methods yield clear gains, validating the necessity of a second-stage preference-optimization process.

\textbf{Finding 5.} \textit{AnchorRank-DPO} consistently outperforms \textit{SinglePair-DPO}, showing that inducing preferences from one anchor and multiple local candidates provides richer supervision than single-pair comparisons.

\begin{table*}[!t]
\centering
\small
\renewcommand{\arraystretch}{1}
\setlength{\tabcolsep}{8pt}
\caption{\textbf{Ablation study across six representative tasks.} We compare \textit{SFT}, \textit{Anchor-GRPO}, and \textit{LDL-GRPO}. Win rates (Raw) and length-controlled win rates (LC) are calculated against GPT-4o. In each case, the best result is bolded with gray shading, and the second-best result is underlined.}
\label{tab:ablation_master}
\begin{tabular}{llcccccc}
\bottomrule
\multirow{2}{*}{\textbf{Benchmark}} & \multirow{2}{*}{\textbf{Backbone}} &
\multicolumn{2}{c}{\textit{SFT}} &
\multicolumn{2}{c}{\textit{Anchor-GRPO}} &
\multicolumn{2}{c}{\textit{LDL-GRPO}~(ours)} \\
\cmidrule(lr){3-4}\cmidrule(lr){5-6}\cmidrule(lr){7-8}
& & Raw $\uparrow$ & LC $\uparrow$ & Raw $\uparrow$ & LC $\uparrow$ & Raw $\uparrow$ & LC $\uparrow$ \\
\midrule
\rowcolor{blue!15} \multicolumn{8}{c}{\textit{Domain I: Creative Writing }} \\ \midrule
\multirow{2}{*}{CFCW} & Qwen2.5-7B      & \underline{0.538} & \underline{0.452} & 0.508 & \underline{0.452} & \cellcolor{gray!60}\textbf{0.588} & \cellcolor{gray!60}\textbf{0.553} \\
 & LLaMA3-8B       & \underline{0.573} & \underline{0.513} & 0.548 & 0.487 & \cellcolor{gray!60}\textbf{0.678} & \cellcolor{gray!60}\textbf{0.583} \\ \addlinespace[0.5ex] 
\hdashline           
\addlinespace[0.5ex] 
\multirow{2}{*}{PBFG}
 & Qwen2.5-7B      & 0.568 & 0.487 & \underline{0.573} & \underline{0.513} & \cellcolor{gray!60}\textbf{0.683} & \cellcolor{gray!60}\textbf{0.543} \\
 & LLaMA3-8B       & 0.482 & 0.372 & \underline{0.673} & \underline{0.508} & \cellcolor{gray!60}\textbf{0.834} & \cellcolor{gray!60}\textbf{0.623} \\
\midrule
\rowcolor{red!15} \multicolumn{8}{c}{\textit{Domain II: Mathematical Reasoning }} \\ \midrule
\multirow{2}{*}{CountProb}
 & Qwen2.5-7B & 0.533 & 0.487 & \underline{0.558} & \underline{0.538} & \cellcolor{gray!60}\textbf{0.578} & \cellcolor{gray!60}\textbf{0.573} \\
& LLaMA3-8B  & \underline{0.427} & 0.382 & 0.422 & \underline{0.397} & \cellcolor{gray!60}\textbf{0.508} & \cellcolor{gray!60}\textbf{0.497} \\ \addlinespace[0.5ex] 
\hdashline           
\addlinespace[0.5ex] 
\multirow{2}{*}{Geometry}  & Qwen2.5-7B & 0.613 & 0.583 & \underline{0.628} & \underline{0.608} & \cellcolor{gray!60}\textbf{0.724} & \cellcolor{gray!60}\textbf{0.709} \\
  & LLaMA3-8B  & \underline{0.457} & \underline{0.437} & 0.452 & 0.432 & \cellcolor{gray!60}\textbf{0.638} & \cellcolor{gray!60}\textbf{0.613} \\
\midrule
\rowcolor{green!15} \multicolumn{8}{c}{\textit{Domain III: Multimodal Understanding  }} \\ \midrule
\multirow{2}{*}{MC } & LLaVA-7B          & \underline{0.131} & \underline{0.121} & 0.090 & 0.035 & \cellcolor{gray!60}\textbf{0.146} & \cellcolor{gray!60}\textbf{0.141} \\
& Qwen2.5-VL-7B     & 0.131 & 0.136 & \underline{0.146} & \underline{0.156} & \cellcolor{gray!60}\textbf{0.236} & \cellcolor{gray!60}\textbf{0.231} \\ \addlinespace[0.5ex] 
\hdashline          
\addlinespace[0.5ex] 
\multirow{2}{*}{RG  } & LLaVA-7B          & 0.256 & 0.261 & \underline{0.357} & \underline{0.327} & \cellcolor{gray!60}\textbf{0.432} & \cellcolor{gray!60}\textbf{0.417} \\
& Qwen2.5-VL-7B     & \underline{0.231} & \underline{0.226} & 0.151 & 0.166 & \cellcolor{gray!60}\textbf{0.261} & \cellcolor{gray!60}\textbf{0.256} \\
\bottomrule
\end{tabular}
\end{table*}

\subsection{Ablation Study}
We conduct detailed ablation studies across writing, mathematical reasoning, and multimodal tasks, with the comparative results of \textit{SFT}, \textit{Anchor-GRPO}, and our proposed \textit{LDL-GRPO}, as summarized in Table~2. 
Experimental results indicate that \textit{LDL-GRPO} achieves the highest Raw and LC scores across all benchmarks and backbones, demonstrating stable performance gains over both the \textit{SFT} baseline and \textit{Anchor-GRPO}. 
Notably, \textit{LDL-GRPO} exhibits superior optimization reliability compared to \textit{Anchor-GRPO}. 
While \textit{Anchor-GRPO} occasionally exhibits instability under noisy self-evaluation and in some cases underperforms the \textit{SFT} baseline, such as in the A-OKVQA-MC task with the LLaVA-7B backbone, \textit{LDL-GRPO} consistently delivers performance improvements. This behavior suggests that the label distribution learning mechanism effectively mitigates the adverse impact of evaluation noise.
Furthermore, these improvements persist under LC evaluation, confirming that the gains arise from genuine enhancements in response quality rather than increased verbosity.

\begin{figure}[!t]
  \centering
  \begin{minipage}[t]{0.495\linewidth}
    \centering
    \includegraphics[width=\linewidth]{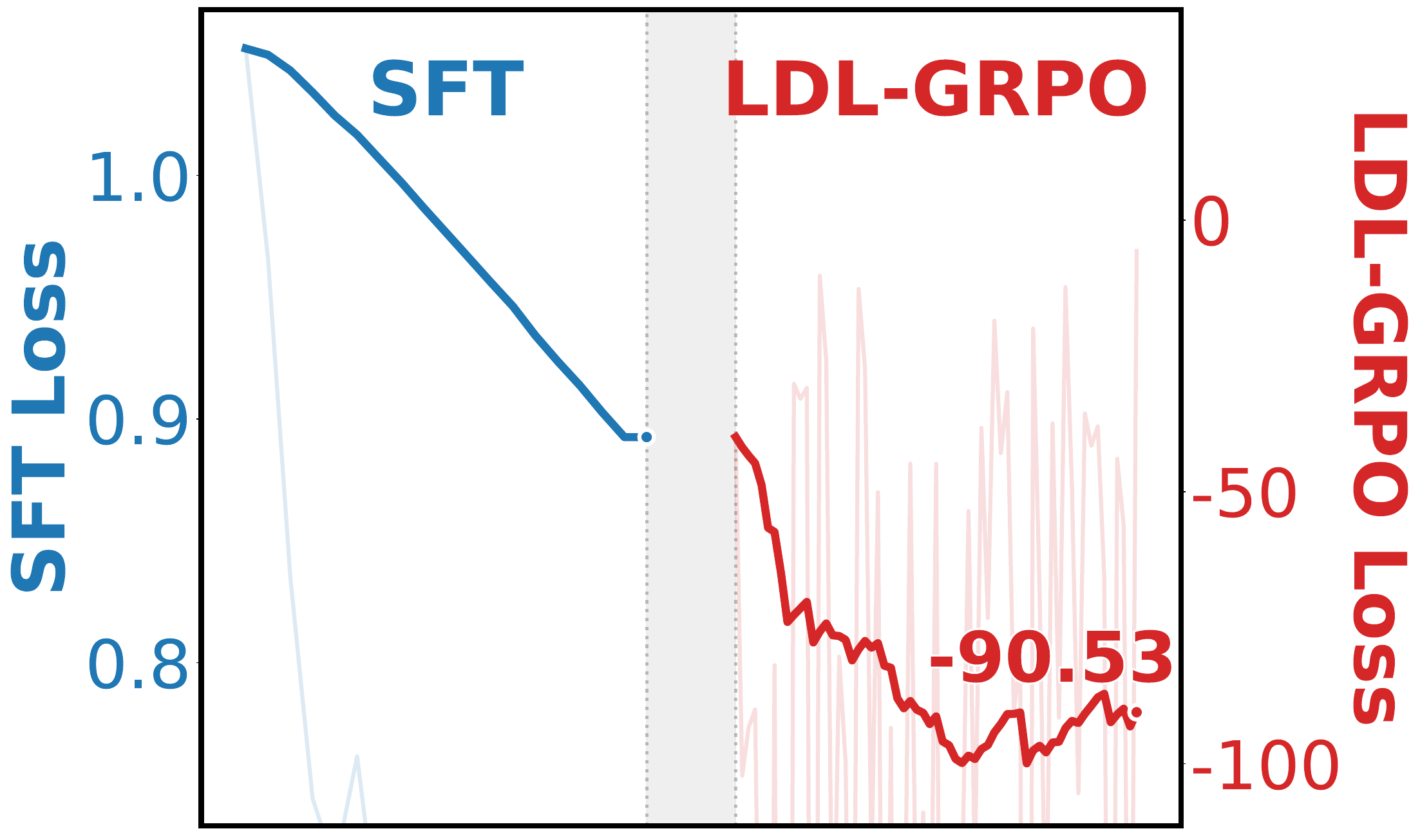}
    {\small CountProb}
  \end{minipage}\hfill
  \begin{minipage}[t]{0.495\linewidth}
    \centering
    \includegraphics[width=\linewidth]{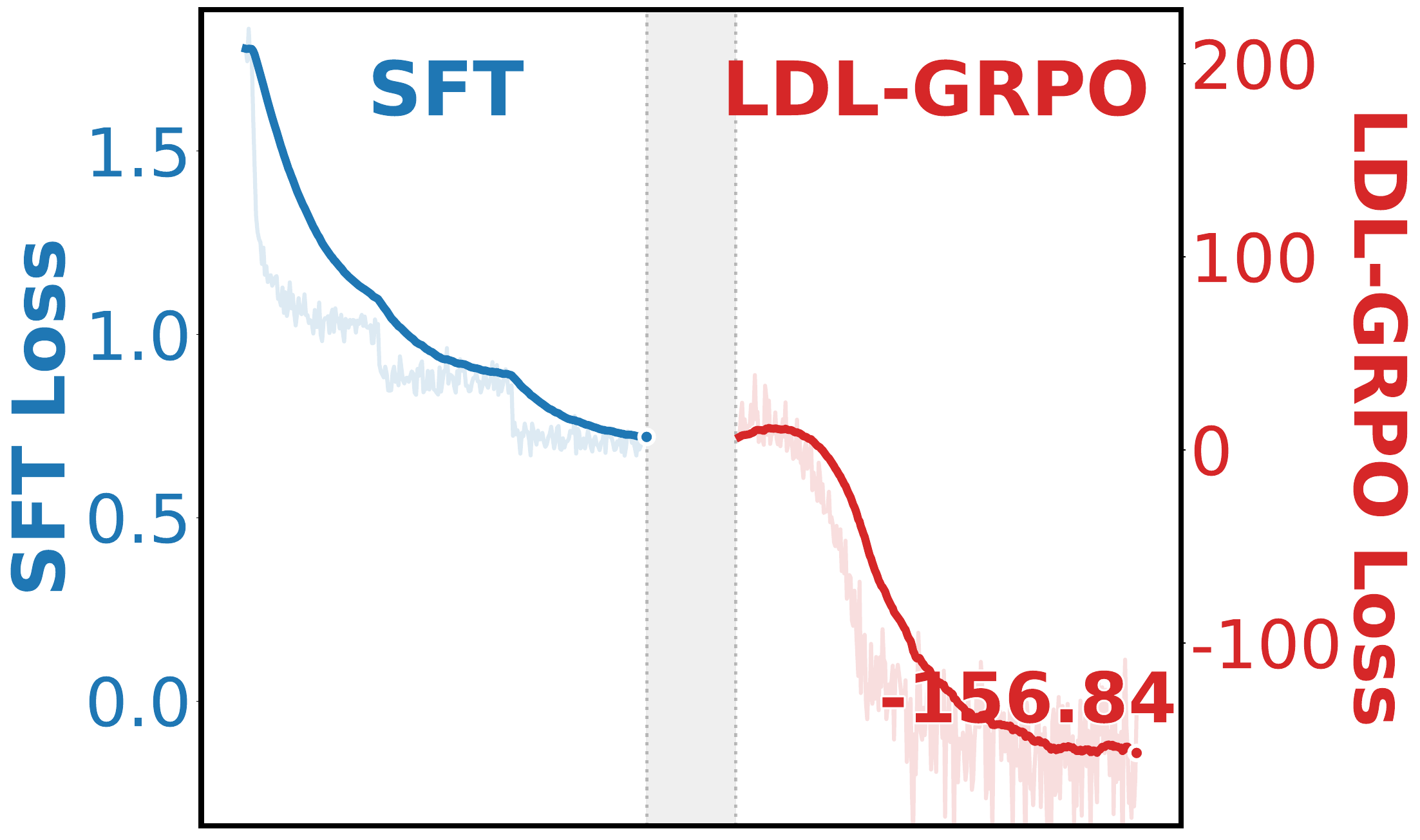}
    {\small A-OKVQA-RG }
  \end{minipage}

 \caption{
\textbf{Two-stage convergence behavior from \textit{SFT} to \textit{LDL-GRPO}.}
The dashed vertical line marks the transition from \textit{SFT} (Stage I) to \textit{LDL-GRPO} (Stage II).
Results are shown for CountProb with LLaMA3-8B and A-OKVQA-RG with LLaVA-1.5-7B.
Across unimodal and multimodal tasks, \textit{LDL-GRPO} exhibits stable loss trajectories after the transition, without sudden divergence.
}
\vspace{-0.75em}
  \label{fig:convergence_two}
\end{figure}

\begin{figure}[!t]
  \centering
  \setlength{\tabcolsep}{2pt}
  \begin{tabular}{cc}
    \includegraphics[width=0.49\columnwidth]{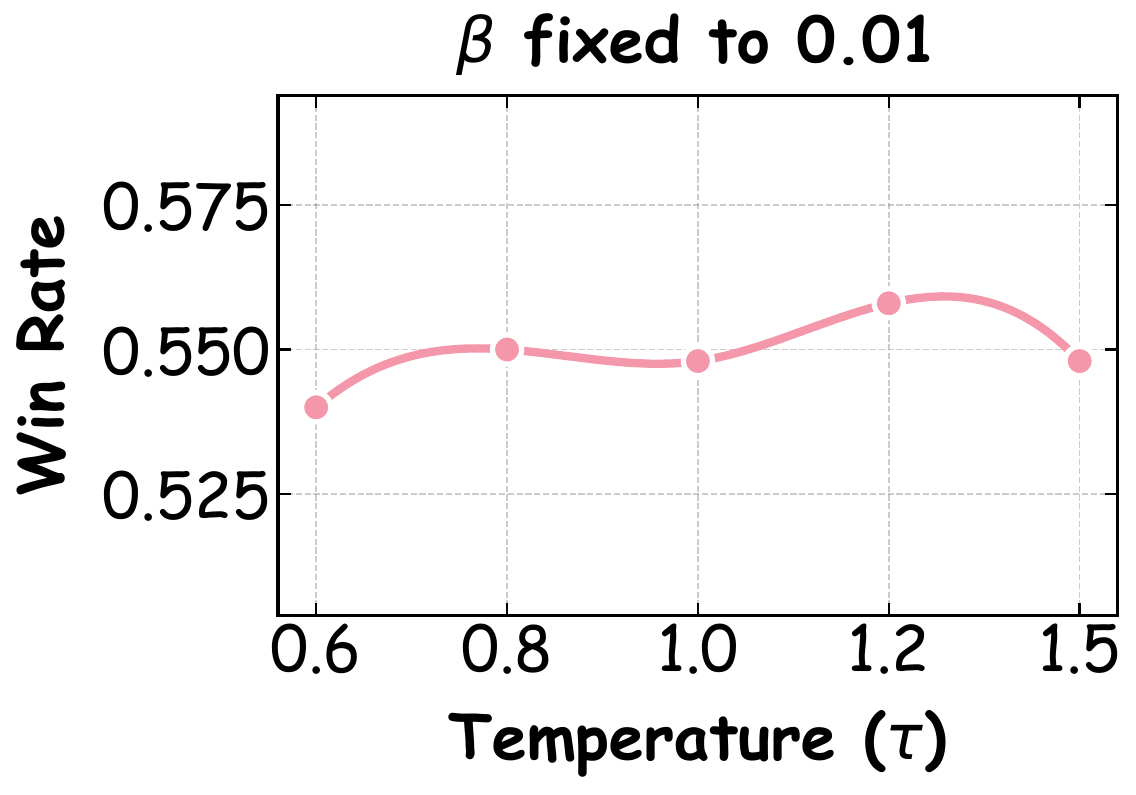} &
    \includegraphics[width=0.49\columnwidth]{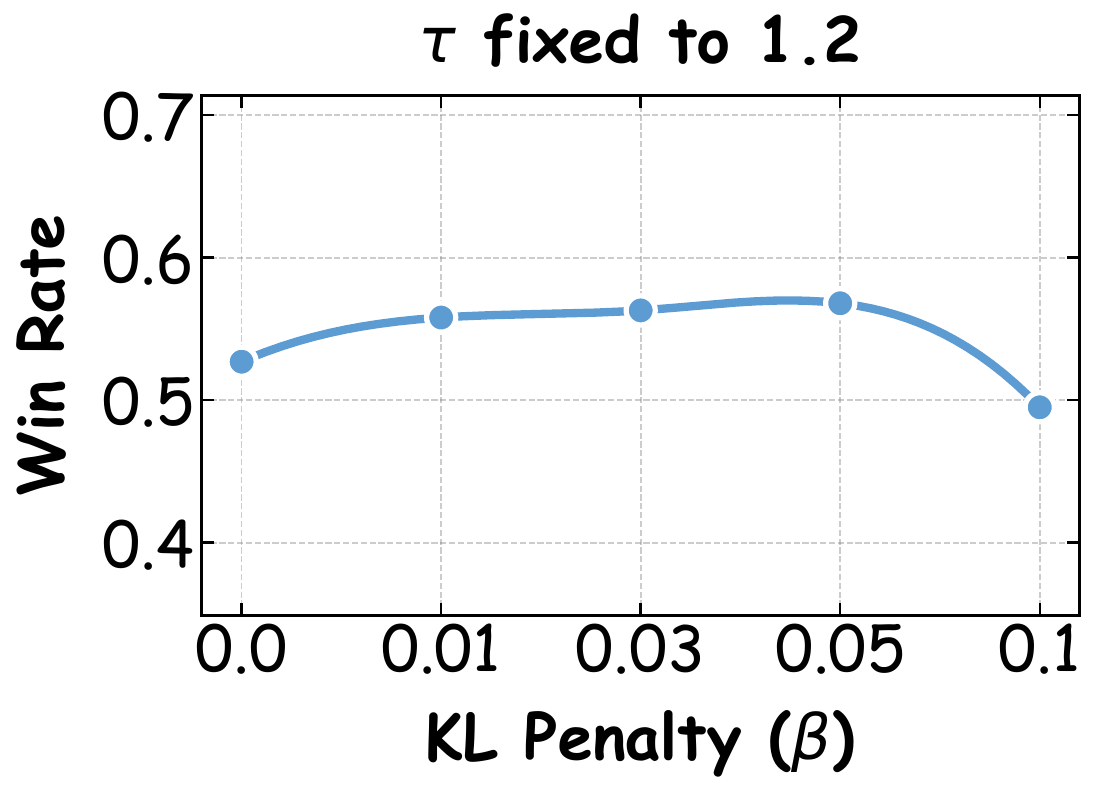}
  \end{tabular}
  \caption{\textbf{Sensitivity analysis on CountProb (LLaMA3-8B).}
  Representative sweeps with fixed $\beta$ (\textit{left}) and fixed $\tau$ (\textit{right}).}
  \vspace{-0.9em}
  \label{fig:sensitivity_countprob_llama3}
\end{figure}

\subsection{Convergence Analysis}
Figure~\ref{fig:convergence_two} illustrates the two-stage training dynamics from \textit{SFT} to \textit{LDL-GRPO}.
After switching to \textit{LDL-GRPO}, both unimodal (CountProb) and multimodal (A-OKVQA-RG) settings exhibit stable loss trajectories without abrupt oscillation or collapse.
This indicates that anchor-conditioned and LDL-based group supervision enables a smooth transition from imitation learning to preference optimization, even in challenging multimodal reasoning tasks. We provided additional results of the convergence analysis in Appendix~\ref{buchongjieguo}. 

\subsection{Hyperparameter Sensitivity Analysis}
\label{sec:sensitivity}
As shown in Figure~\ref{fig:sensitivity_countprob_llama3}, our method is
insensitive to moderate variations of the sampling temperature $\tau$ and
the KL penalty $\beta$ on CountProb with LLaMA3-8B.
With $\beta=0.01$ fixed, sweeping $\tau\in[0.6,1.5]$ yields a nearly flat
win-rate curve, indicating stable performance across a wide sampling regime.
With $\tau=1.2$ fixed, small-to-moderate $\beta$ values maintain comparable
performance, whereas overly large $\beta$ noticeably degrades win rate,
suggesting that excessive KL regularization restricts policy improvement.
Based on this analysis, we adopt $\tau=1.2$ and $\beta=0.01$ in all experiments. Due to the limited space, more results of hyperparameter sensitivity analysis can be found in Appendix~\ref{buchongjieguo}.

\section{Conclusion}
In this work, we study a practical challenge that on-premise small expert models often stop at SFT and fail to benefit from a second-stage preference-optimization loop due to the high cost of human feedback, reward-model construction, and repeated teacher scoring.
To address this, we propose an anchor-guided and fully local alignment strategy that queries a black-box teacher only once per prompt to obtain an anchor, and then relies on local multi-response sampling and self-evaluation to induce training signals for preference optimization. Building on this setup, we introduce \textit{LDL-GRPO}, which converts anchor-induced comparisons into label-distribution-style group supervision and performs stable group-relative policy optimization.
Across unimodal and multimodal benchmarks, the proposed method outperforms SFT-only and anchor-guided RL baselines under raw and length-controlled win rates, making SFT-to-RL practical for strict on-premise deployment.

\bibliography{main}
\bibliographystyle{arxiv}
\newpage
\appendix
\onecolumn 
\newpage
\appendix
\onecolumn

\setcounter{section}{0}
\renewcommand{\thesection}{\Alph{section}}
\renewcommand{\thesubsection}{\thesection.\arabic{subsection}}
\renewcommand{\thefigure}{\thesection.\arabic{figure}}
\renewcommand{\thetable}{\thesection.\arabic{table}}
\renewcommand{\theequation}{\thesection.\arabic{equation}}

\newcommand{\resetcounters}{%
  \setcounter{figure}{0}%
  \setcounter{table}{0}%
  \setcounter{equation}{0}%
}

\section*{Appendix Table of Contents}
\vspace{0.75em} 

\noindent \textbf{A.} \hspace{0.5em} Notation and Definition \dotfill \pageref{sec:notation} \par 
\vspace{0.4em} 

\noindent \textbf{B.} \hspace{0.5em} Theoretical Proofs \dotfill \pageref{sec:proofs} \par 

\noindent \hspace{1.5em} \textbf{B.1} \hspace{0.5em} Proof of Theorem 3.1 (Order Consistency) \dotfill \pageref{sec:proof_thm_3.1} \par 

\noindent \hspace{1.5em} \textbf{B.2} \hspace{0.5em} Proof of Theorem 3.3 (Near-Optimality) \dotfill \pageref{sec:proof_thm_3.3} \par 
\vspace{0.4em} 

\noindent \textbf{C.} \hspace{0.5em} Supplementary Experimental Instructions \dotfill \pageref{sec:supp_exp} \par 

\noindent \hspace{1.5em} \textbf{C.1} \hspace{0.5em} Evaluation Protocol and Judge Models \dotfill \pageref{appendix:evaluation} \par 

\noindent \hspace{1.5em} \textbf{C.2} \hspace{0.5em} Teacher Query Budget and Cost Analysis \dotfill \pageref{appendix:budget} \par 

\noindent \hspace{1.5em} \textbf{C.3} \hspace{0.5em} Datasets and Evaluation Tasks \dotfill \pageref{appendix:datasets} \par 

\noindent \hspace{1.5em} \textbf{C.4} \hspace{0.5em} Backbone Models and Implementation Details \dotfill \pageref{appendix:models} \par

\noindent \textbf{D.} \hspace{0.5em} Limitation \dotfill \pageref{sec:limitation} \par 
\vspace{0.4em} 

\noindent \textbf{E.} \hspace{0.5em} Reproducibility \dotfill \pageref{sec:repro} \par 
\vspace{0.4em} 

\noindent \textbf{F.} \hspace{0.5em} Use of LLMs in Writing \dotfill \pageref{sec:llm_use} \par 
\vspace{0.4em} 

\noindent \textbf{G.} \hspace{0.5em} Supplementary Results \dotfill \pageref{buchongjieguo} \par 
\vspace{0.4em}

\newpage

\section{Notation and Definition}
\label{sec:notation}
 \begin{table}[!ht]
\centering
\caption{Mathematical notations and definitions}
\label{tab:notation}
\renewcommand{\arraystretch}{1.8}
\setlength{\tabcolsep}{6pt}
\begin{tabular}{|c|p{10.5cm}|}
\hline
\textbf{Symbol} & \textbf{Definition} \\
\hline
$\mathrm{x}$ 
& A prompt or query (scalar or structured input, e.g., text or multimodal input). \\
\hline
$\mathrm{y}$ 
& A response sequence generated for prompt $\mathrm{x}$. \\
\hline
$\mathcal{D}_{\mathrm{x}}$ 
& Unlabeled on-premise prompt pool. \\
\hline
$T$ 
& Black-box teacher model, accessed only via generation. \\
\hline
$\mathrm{a}=T(\mathrm{x})$ 
& Teacher-generated response for prompt $\mathrm{x}$, used as an anchor. \\
\hline
$p_{\text{base}}(\cdot|\mathrm{x})$ 
& Base student policy before supervised fine-tuning. \\
\hline
$p_{\text{sft}}(\cdot|\mathrm{x})$ 
& Student policy after supervised fine-tuning (Stage~I). \\
\hline
$p_\theta(\cdot|\mathrm{x})$ 
& Trainable student policy in Stage~II, parameterized by $\theta$. \\
\hline
$p_{\mathrm{ref}}(\cdot|\mathrm{x})$ 
& Frozen reference policy for KL regularization (set to $p_{\text{sft}}$). \\
\hline
$U(\mathrm{x})=\{\mathrm{y}_k\}_{k=1}^{K}$ 
& Candidate response set sampled from the student for prompt $\mathrm{x}$. \\
\hline
$K$ 
& Number of sampled candidate responses per prompt. \\
\hline
$g_\theta(\mathrm{x},\mathrm{a},\mathrm{y})$ 
& Anchor-conditioned self-evaluation score produced by the student. \\
\hline
$s^* = g_\theta(\mathrm{x},\mathrm{a},\mathrm{a})$ 
& Anchor self-score used as a calibration baseline. \\
\hline
$r_k$ 
& Anchor-referenced margin for candidate $\mathrm{y}_k$, defined as $r_k=s_k-s^*$. \\
\hline
$\sigma(\cdot)$ 
& Sigmoid function used for PU confidence calibration. \\
\hline
$\gamma$ 
& Sharpness parameter controlling PU confidence transition. \\
\hline
$\tau$ 
& Temperature parameter controlling the concentration of softmax weighting. \\
\hline
$D_{\mathrm{x}}\in\Delta^{K-1}$ 
& PU-induced soft preference label distribution over $U(\mathrm{x})$. \\
\hline
$q_\theta(\mathrm{y}_k|\mathrm{x})$ 
& Candidate-normalized student policy probability over $U(\mathrm{x})$. \\
\hline
$\beta$ 
& Weight of KL regularization against the reference policy. \\
\hline
\end{tabular}
\end{table}

\newpage

\section{Theoretical Proofs}
\label{sec:proofs}
\subsection{Proof of Theorem~\ref{thm:order_consistency}}\label{sec:proof_thm_3.1}
\begin{proof}
Define the function
\[
f(r) := \log \sigma(\gamma r) + \frac{r}{\tau}.
\]
Then the induced distribution can be written as
\[
D_\mathrm{x}(k) \propto \exp\big(f(r_k)\big).
\]

We first show that $f$ is strictly increasing.
Taking the derivative,
\[
f'(r)
= \frac{d}{dr}\log \sigma(\gamma r) + \frac{1}{\tau}
= \gamma \big(1 - \sigma(\gamma r)\big) + \frac{1}{\tau}.
\]
Since $\gamma>0$, $\tau>0$, and $0 < \sigma(\gamma r) < 1$, we have
\[
f'(r) > 0 \quad \text{for all } r \in \mathbb{R}.
\]
Hence $f$ is strictly increasing.

Therefore, for any $i,j$,
\[
r_i > r_j
\;\Longrightarrow\;
f(r_i) > f(r_j)
\;\Longrightarrow\;
\exp(f(r_i)) > \exp(f(r_j))
\;\Longrightarrow\;
D_\mathrm{x}(i) > D_\mathrm{x}(j).
\]
This proves that $D_\mathrm{x}$ strictly preserves the ordering induced by the margins $\{r_k\}$.
\end{proof}
\subsection{Proof of Theorem~\ref{thm:near_optimality}}\label{sec:proof_thm_3.3}

\begin{proof}

 We first define the standard softmax distribution
\[
\bar D(k)
:= \frac{\exp(r_k/\tau)}{\sum_{j=1}^K \exp(r_j/\tau)},
\]
and define the normalization constant
\[
Z := \sum_{j=1}^K \exp(r_j/\tau).
\]

We then relate $\log Z$ to the expected margin and the entropy of $\bar D$.
By definition,
\[
\log \bar D(k)
= \frac{r_k}{\tau} - \log Z.
\]
Therefore, the Shannon entropy of $\bar D$ satisfies
\[
\begin{aligned}
H(\bar D)
&:= -\sum_{k=1}^K \bar D(k)\log \bar D(k) \\
&= -\sum_{k=1}^K \bar D(k)
\left(
\frac{r_k}{\tau} - \log Z
\right) \\
&= -\frac{1}{\tau}\sum_{k=1}^K \bar D(k) r_k
+ \log Z \sum_{k=1}^K \bar D(k).
\end{aligned}
\]
Since $\sum_{k=1}^K \bar D(k) = 1$, we obtain
\[
\log Z
= \frac{1}{\tau}\mathbb{E}_{\bar D}[r_k] + H(\bar D),
\]
which gives
\[
\log \sum_{k=1}^K \exp(r_k/\tau)
= \frac{1}{\tau}\mathbb{E}_{\bar D}[r_k] + H(\bar D).
\]

Next, we bound the two terms.
On the one hand, the entropy is upper-bounded by
\[
H(\bar D) \le \log K.
\]
On the other hand,
\[
\log \sum_{k=1}^K \exp(r_k/\tau)
\ge \log \exp(r_{\max}/\tau)
= \frac{r_{\max}}{\tau}.
\]
Combining the above inequalities yields
\[
\mathbb{E}_{\bar D}[r_k]
\ge r_{\max} - \tau \log K.
\]

Observe that $D_\mathrm{x}$ can be written as
\[
D_\mathrm{x}(k)
=
\frac{\bar D(k)\,\sigma(\gamma r_k)}
{\sum_{j=1}^K \bar D(j)\,\sigma(\gamma r_j)}.
\]
Since $\sigma(\gamma r)$ is a strictly increasing function of $r$,
multiplying $\bar D$ by $\sigma(\gamma r_k)$ shifts probability mass toward larger margins.

Formally, for the random variable $R$ taking values $\{r_k\}$ under $\bar D$,
the covariance $\mathrm{Cov}_{\bar D}(R,\sigma(\gamma R)) \ge 0$.
Thus,
\[
\mathbb{E}_{k\sim D_\mathrm{x}}[r_k]
\ge
\mathbb{E}_{k\sim \bar D}[r_k].
\]

Combining the two steps yields
\[
r_{\max} - \mathbb{E}_{k\sim D_\mathrm{x}}[r_k]
\le
r_{\max} - \mathbb{E}_{k\sim \bar D}[r_k]
\le
\tau \log K.
\]
This completes the proof.
\end{proof}

\newpage

\section{Supplementary Experimental Instructions}\label{sec:supp_exp}

\subsection{Evaluation Protocol and Judge Models}

\label{appendix:evaluation}

\paragraph{Judge Models.}
For automatic preference evaluation, we use large instruction-tuned foundation models
as external judges.
Specifically, for unimodal text-only tasks, we adopt Qwen3-235B-A22B-Instruct,
available at \url{https://huggingface.co/Qwen/Qwen3-235B-A22B-Instruct-2507}.
For multimodal tasks, we use Qwen3-VL-235B-A22B-Instruct,
available at \url{https://huggingface.co/Qwen/Qwen3-VL-235B-A22B-Instruct}. The judge prompt used for automatic evaluation is shown below.

\begin{figure}[!ht]
\centering
\label{p1}
\fbox{
\begin{minipage}{0.95\linewidth}
\small
\texttt{
You will judge two responses to the same question.\\
Choose the better one overall (correctness, grounding, clarity, completeness).\\
You are given the reference (ground-truth answer) to help judge correctness.\\
Reply with exactly one character:\\
A = Response A is better\\
B = Response B is better\\
T = Tie / too close to call\\
\\
=== Question / Prompt ===\\
\{task\_prompt\}\\
\\
=== Reference (Ground Truth) ===\\
\{gt\}\\
\\
=== Response A ===\\
\{ans\_model\}\\
\\
=== Response B ===\\
\{ans\_gpt\}
}
\end{minipage}
}
\caption{Prompt used for automatic pairwise preference evaluation.}
\end{figure}

\subsection{Teacher Query Budget and Cost Analysis}
\label{appendix:budget}
We summarize the teacher-query complexity of different alignment strategies
under a unified setting with $N$ prompts and $K$ candidates per prompt.

\begin{table}[h]
\centering
\caption{Teacher-query complexity comparison.}
\begin{tabular}{l c}
\toprule
\textbf{Method} & \textbf{Teacher Calls Order} \\
\midrule
\textit{SFT} & $N$ \\
\textit{SinglePair-DPO} & $N$ \\
\textit{Teacher-as-Judge RL} & $NK$ \\
\textit{AnchorRank-DPO} & $N$ \\
\textit{LDL-GRPO} & $N$ \\
\bottomrule
\end{tabular}
\end{table}

Both \textit{AnchorRank-DPO} and \textit{LDL-GRPO} require only one black-box teacher
generation per prompt, while all candidate scoring and policy updates
are performed fully on-premise.

\subsection{Datasets and Evaluation Tasks}
\label{appendix:datasets}

\paragraph{WritingPrompts.}
We use the WritingPrompts dataset~\cite{writingprompts} for creative writing evaluation.
The original dataset is available at \url{https://arxiv.org/abs/1805.04833}.
We construct two variants:
(i) \text{WritingPrompts-CW}, focusing on constraint-following creative writing,
and (ii) \text{WritingPrompts-EU}, emphasizing expressive and stylistic diversity. Table~\ref{tab:writingprompts_cases} provides representative examples from the WritingPrompts dataset used in our evaluation, including the input prompt and model-generated responses.

\begin{table*}[t]
\centering
\caption{Representative examples from the WritingPrompts dataset.}
\label{tab:writingprompts_cases}
\setlength{\tabcolsep}{6pt}
\begin{tabular}{p{0.18\textwidth} p{0.75\textwidth}}
\toprule
\textbf{Field} & \textbf{Content} \\
\midrule

\textbf{Prompt (Case 1)} &
\textit{[EU]} As a treat to fans, George R. R. Martin writes a character inspired by himself into the final book of the \emph{A Song of Ice and Fire} series. That character suffers the most gruesome fate of all. \\

\textbf{Reference} &
A dramatic courtroom-style scene in which the author-character is judged and sentenced, written with dark humor and theatrical dialogue. \\

\textbf{Model Output} &
A narrative depicting the author-character pleading innocence before being condemned to a brutal fate, featuring exaggerated dialogue, irony, and grim imagery consistent with the prompt. \\

\midrule

\textbf{Prompt (Case 2)} &
\textit{[CW]} Each character can only speak one sentence. \\

\textbf{Reference} &
A tense short story set aboard a damaged submarine, where each character’s single-sentence dialogue advances the plot toward a sacrificial ending. \\

\textbf{Model Output} &
A compact narrative obeying the one-sentence-per-character constraint, using internal monologue and sparse dialogue to convey urgency and moral tension. \\

\bottomrule
\end{tabular}
\end{table*}

\paragraph{Competition Math.}
We evaluate mathematical reasoning using the Competition Math dataset~\cite{competition_math},
available at \url{https://arxiv.org/abs/2103.03874}.
We report results on two subsets: \text{CompMath-Count}, which emphasizes counting and arithmetic reasoning,
and \text{CompMath-Geometry}, which focuses on geometric problem solving. Table~\ref{tab:compmath_cases} provides representative examples from the Competition Math dataset used in our evaluation, including the input prompt and model-generated responses.

\begin{table*}[t]
\centering
\caption{Representative examples from the Competition Math dataset used in our evaluation.}
\label{tab:compmath_cases}
\setlength{\tabcolsep}{6pt}
\renewcommand{\arraystretch}{1.25}
\begin{tabular}{p{0.16\linewidth} p{0.38\linewidth} p{0.38\linewidth}}
\toprule
\textbf{Subset} & \textbf{Prompt} & \textbf{Ground Truth Answer} \\
\midrule

\textbf{CompMath-Count} &
\small
Solve the following competition math problem:  
Bob's password consists of a non-negative single-digit number followed by a letter and another non-negative single-digit number (which could be the same as the first one).  
What is the probability that Bob's password consists of an odd single-digit number, followed by a letter, and a positive single-digit number?
&
\small
Exactly $5$ out of the $10$ non-negative single-digit numbers are odd, giving probability $\frac{1}{2}$.  
The second character is always a letter.  
For the last character, $9$ out of $10$ digits are positive.  
Thus, the desired probability is  
$\frac{1}{2} \times 1 \times \frac{9}{10} = \boxed{\frac{9}{20}}$.
\\

\midrule

\textbf{CompMath-Geometry} &
\small
Solve the following competition math problem:  
A wire is cut into two pieces, one of length $a$ and the other of length $b$.  
The piece of length $a$ is bent to form an equilateral triangle, and the piece of length $b$ is bent to form a regular hexagon.  
If the triangle and the hexagon have equal area, what is $\frac{a}{b}$?
&
\small
The triangle side length is $\frac{a}{3}$ and its area is $\frac{\sqrt{3}}{4}(\frac{a}{3})^2 = \frac{a^2\sqrt{3}}{36}$.  
The hexagon side length is $\frac{b}{6}$ and its area is $\frac{3\sqrt{3}}{2}(\frac{b}{6})^2 = \frac{b^2\sqrt{3}}{24}$.  
Equating the two areas yields $\frac{a^2}{b^2}=\frac{3}{2}$, hence  
$\frac{a}{b}=\boxed{\frac{\sqrt{6}}{2}}$.
\\

\bottomrule
\end{tabular}
\end{table*}

\paragraph{A-OKVQA.}
For multimodal evaluation, we use the A-OKVQA benchmark~\cite{A-OKVQA},
available at \url{https://arxiv.org/abs/2206.01718}.
We consider two tasks: \text{A-OKVQA-MC}, a multiple-choice VQA task,
and \text{A-OKVQA-Rationale}, which requires generating free-form rationales grounded in the image content.
Table~\ref{tab:aokvqa_cases} provides representative examples from the A-OKVQA dataset used in our evaluation, including the input prompt and model-generated responses.

\begin{table}[t]
\centering
\caption{Example cases from the A-OKVQA benchmark used in our evaluation.}
\label{tab:aokvqa_cases}
\setlength{\tabcolsep}{6pt}
\begin{tabular}{p{0.18\linewidth} p{0.36\linewidth} p{0.38\linewidth}}
\toprule
\textbf{Task} & \textbf{Prompt (Image + Question)} & \textbf{Ground Truth Output} \\
\midrule

A-OKVQA-MC &
\textbf{Question:} What color is reflected strongly off the metal cabinet cases? \newline
\textbf{Options:} \newline
(0) purple \newline
(1) red \newline
(2) blue \newline
(3) yellow
&
\textbf{Answer:} red \\

\midrule

A-OKVQA-Rationale &
\textbf{Question:} What does he use to build momentum? \newline
(The input includes an image of a person skateboarding.)
&
\textbf{Rationale:} \newline
A guy is on a skateboard as he cruises down the street.
He uses the bottom part of his leg to push against the ground
and propel himself forward. \\

\bottomrule
\end{tabular}
\end{table}

\subsection{Backbone Models and Implementation Details}
\label{appendix:models}

For unimodal on-premise deployment, we use \text{Qwen2.5-7B-Instruct}
(\url{https://huggingface.co/Qwen/Qwen2.5-7B-Instruct})
and \text{LLaMA3-8B-Instruct}
(\url{https://huggingface.co/meta-llama/Meta-Llama-3-8B-Instruct}).

For multimodal tasks, we use \text{LLaVA-7B}
(\url{https://github.com/haotian-liu/LLaVA})
and \text{Qwen2.5-VL-7B-Instruct}
(\url{https://huggingface.co/Qwen/Qwen2.5-VL-7B-Instruct}).

All models are deployed locally without external API calls during alignment and evaluation.

\paragraph{Anchor-conditioned self-evaluation (scalar scoring).}
For each query $\mathrm{x}$, an anchor response $\mathrm{a}$ generated by the teacher,
and a candidate response $\mathrm{y}$ sampled from the student, we prompt the
student to perform anchor-conditioned self-evaluation as follows:

\begin{quote}
\textbf{Question:}\\
A jar contains 7 red marbles and 5 blue marbles. Two marbles are drawn without replacement.
What is the probability that both marbles are red?

\textbf{Reference Answer:}\\
There are $\binom{12}{2}$ ways to draw 2 marbles. Favorable outcomes are drawing 2 from the 7 red marbles: $\binom{7}{2}$.
Thus the probability is $\frac{\binom{7}{2}}{\binom{12}{2}}=\frac{21}{66}=\frac{7}{22}$.

\textbf{Candidate Answer:}\\
The probability is $(7/12)\cdot(6/11)=42/132=7/22$.

\textbf{Scalar Score (example output):} 0.95
\end{quote}

\paragraph{Anchor self-calibration.}
To calibrate scores under the positive--unlabeled setting, we also evaluate
the anchor against itself using the same prompt structure.
Specifically, we set the candidate answer to be identical to the anchor:

\begin{quote}
\textbf{Question:}\\
A fair coin is flipped 4 times. What is the probability of getting exactly 3 heads?

\textbf{Reference Answer:}\\
There are $\binom{4}{3}=4$ sequences with exactly 3 heads, and $2^4=16$ total outcomes,
so the probability is $4/16=1/4$.

\textbf{Candidate Answer:}\\
There are $\binom{4}{3}=4$ sequences with exactly 3 heads, and $2^4=16$ total outcomes,
so the probability is $4/16=1/4$.

\textbf{Scalar Score (example output):} 1.00
\end{quote}

\begin{figure}[t]
  \centering

  \begin{minipage}[t]{0.48\linewidth}
    \centering
    \includegraphics[width=\linewidth]{loss_plot/llama3_8b_CountProb.pdf}
    \vspace{-2mm}
    {\small CountProb (LLaMA3-8B)}
  \end{minipage}\hfill
  \begin{minipage}[t]{0.48\linewidth}
    \centering
    \includegraphics[width=\linewidth]{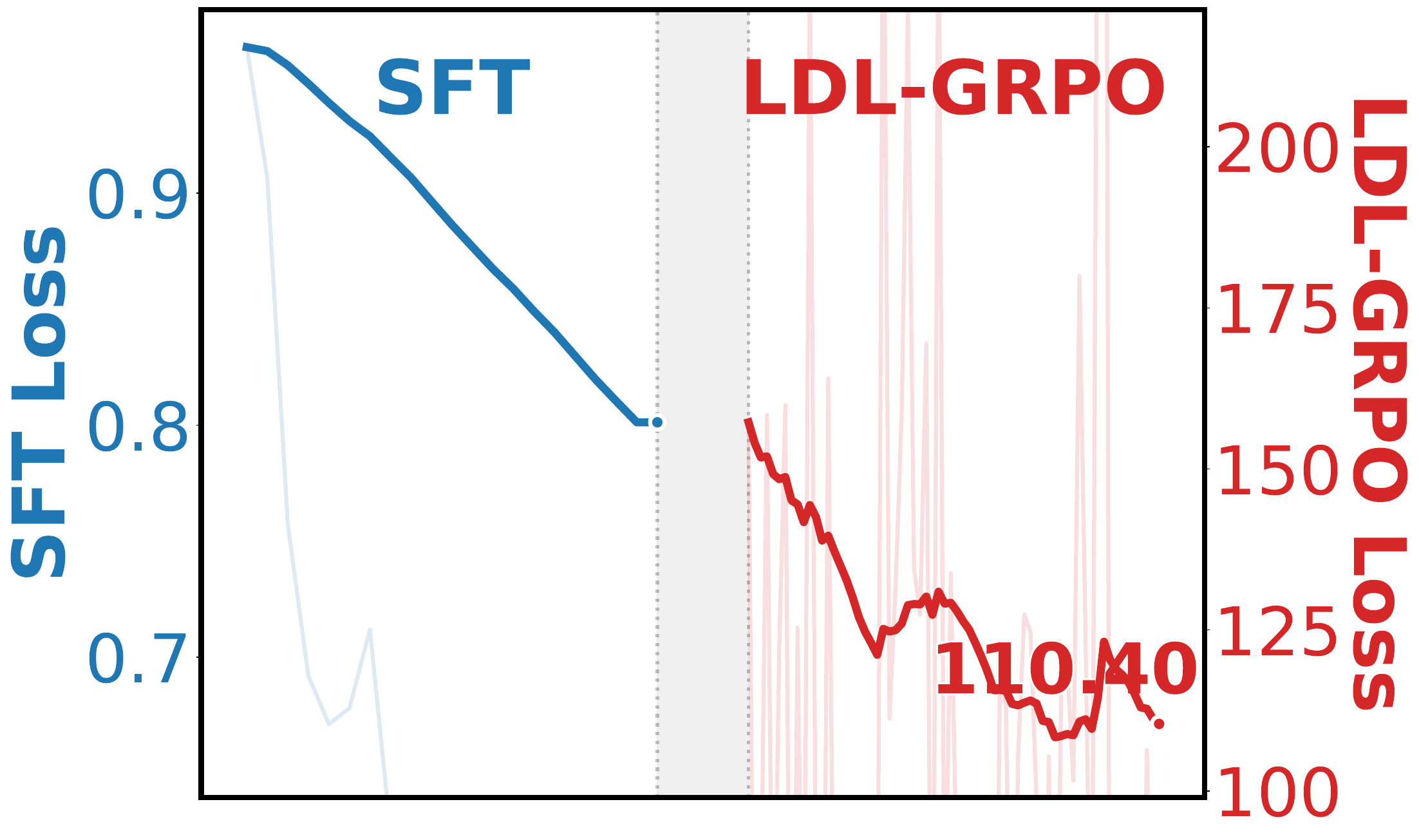}
    \vspace{-2mm}
    {\small Geometry (LLaMA3-8B)}
  \end{minipage}

  \vspace{2mm}

  \begin{minipage}[t]{0.48\linewidth}
    \centering
    \includegraphics[width=\linewidth]{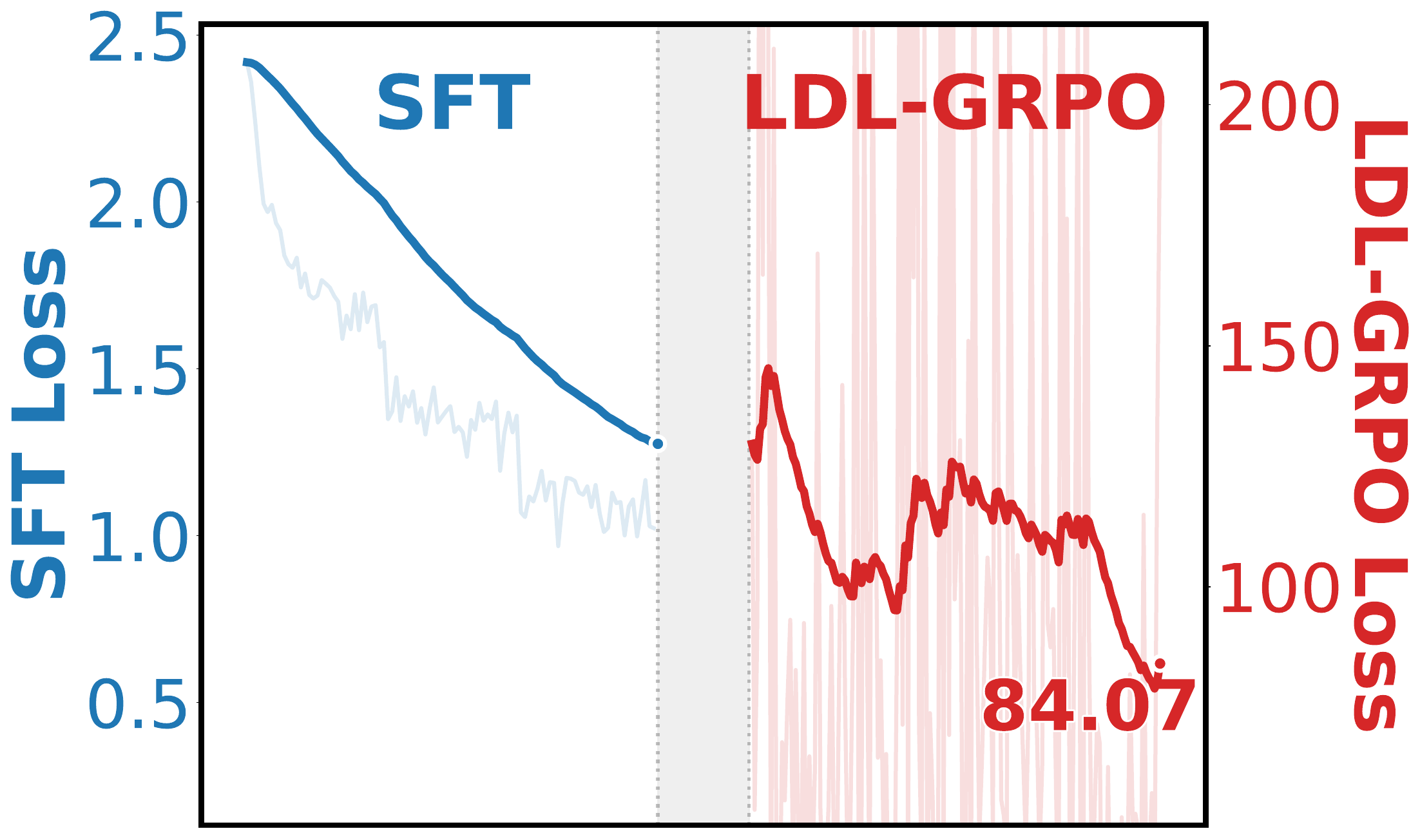}
    \vspace{-2mm}
    {\small CFCW (LLaMA3-8B)}
  \end{minipage}\hfill
  \begin{minipage}[t]{0.48\linewidth}
    \centering
    \includegraphics[width=\linewidth]{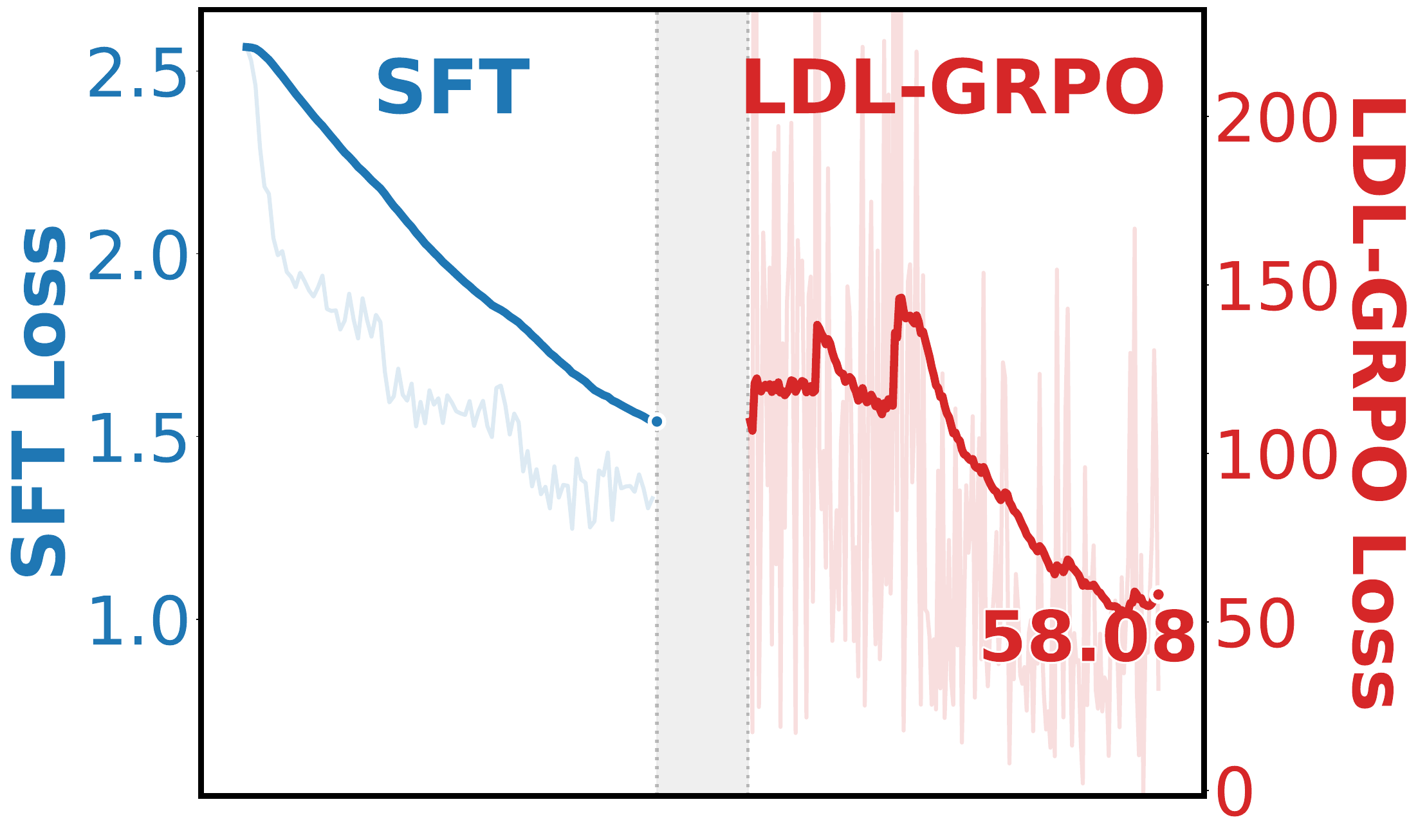}
    \vspace{-2mm}
    {\small PBFG (LLaMA3-8B)}
  \end{minipage}

  \vspace{2mm}

  \begin{minipage}[t]{0.48\linewidth}
    \centering
    \includegraphics[width=\linewidth]{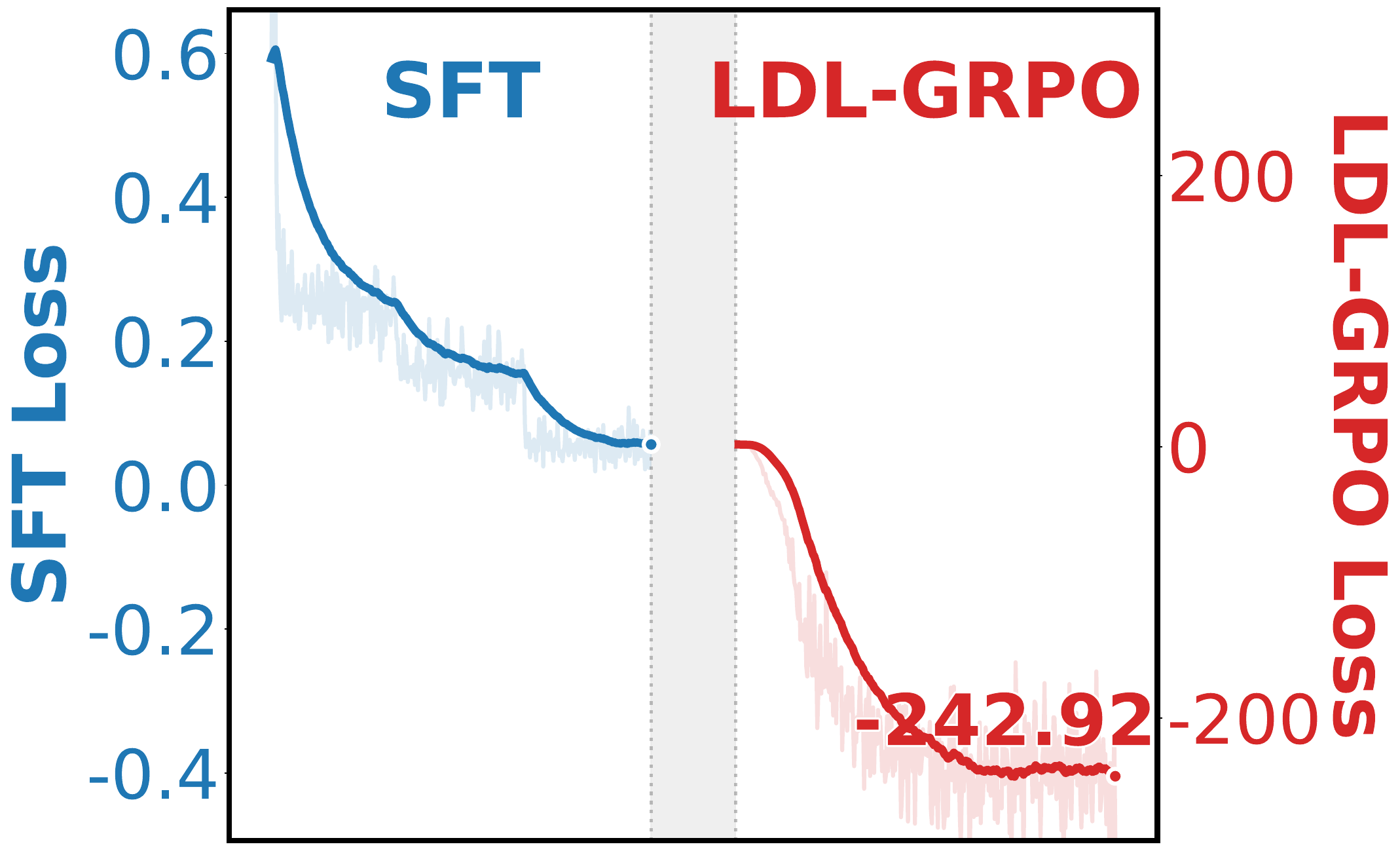}
    \vspace{-2mm}
    {\small MC (LLaVA-1.5-7B)}
  \end{minipage}\hfill
  \begin{minipage}[t]{0.48\linewidth}
    \centering
    \includegraphics[width=\linewidth]{loss_plot/llava_1_5_7b_RG.pdf}
    \vspace{-2mm}
    {\small RG (LLaVA-1.5-7B)}
  \end{minipage}

  \caption{
  \textbf{Two-stage convergence from SFT to LDL-GRPO across models and tasks.}
  Each panel shows one model--task pair, illustrating a stable transition from SFT to LDL-GRPO.
  }
  \label{fig:convergence_supp1}
\end{figure}

\begin{figure}[t]
  \centering

  \begin{minipage}[t]{0.48\linewidth}
    \centering
    \includegraphics[width=\linewidth]{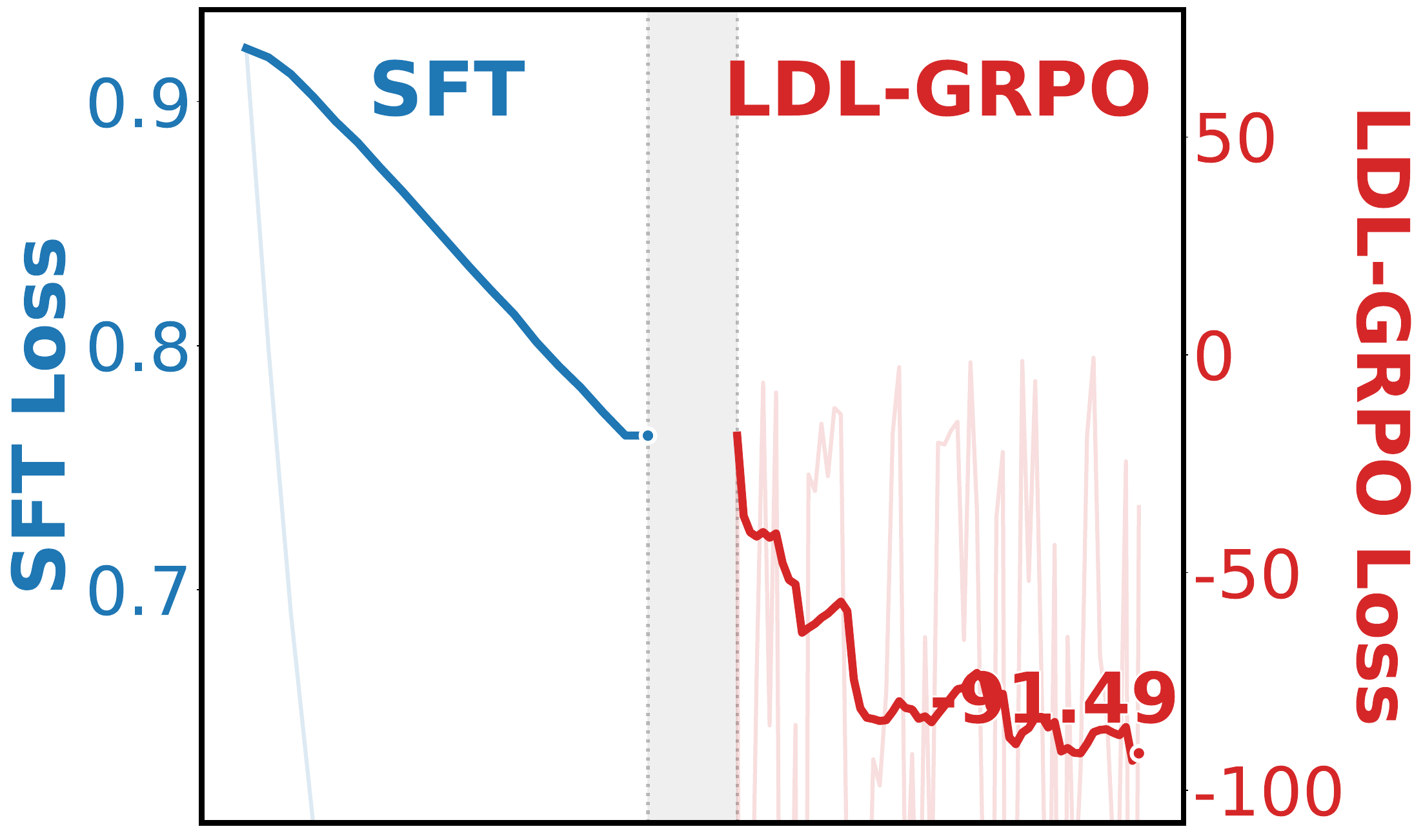}
    \vspace{-2mm}
    {\small CountProb (Qwen2.5-7B)}
  \end{minipage}\hfill
  \begin{minipage}[t]{0.48\linewidth}
    \centering
    \includegraphics[width=\linewidth]{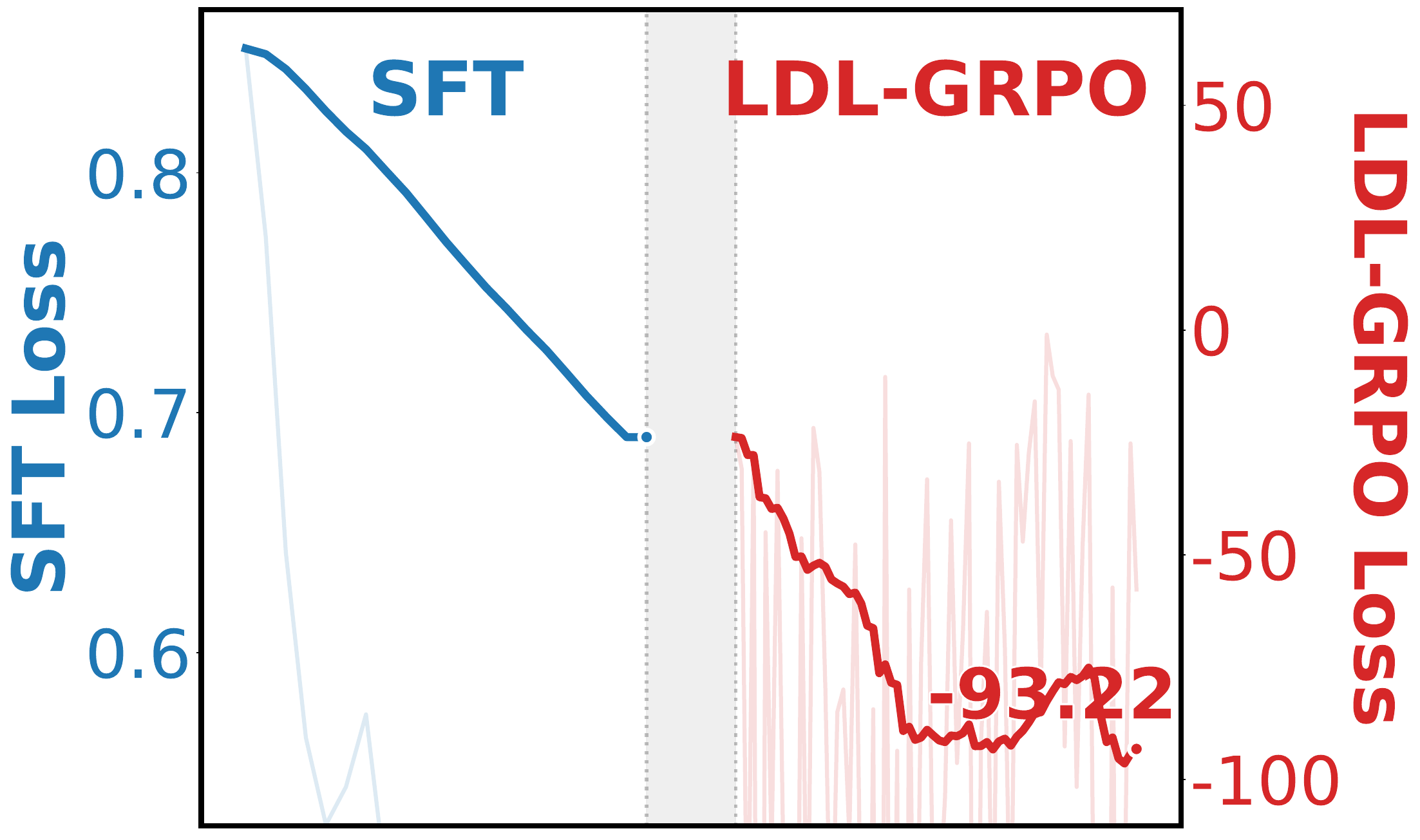}
    \vspace{-2mm}
    {\small Geometry (Qwen2.5-7B)}
  \end{minipage}

  \vspace{2mm}

  \begin{minipage}[t]{0.48\linewidth}
    \centering
    \includegraphics[width=\linewidth]{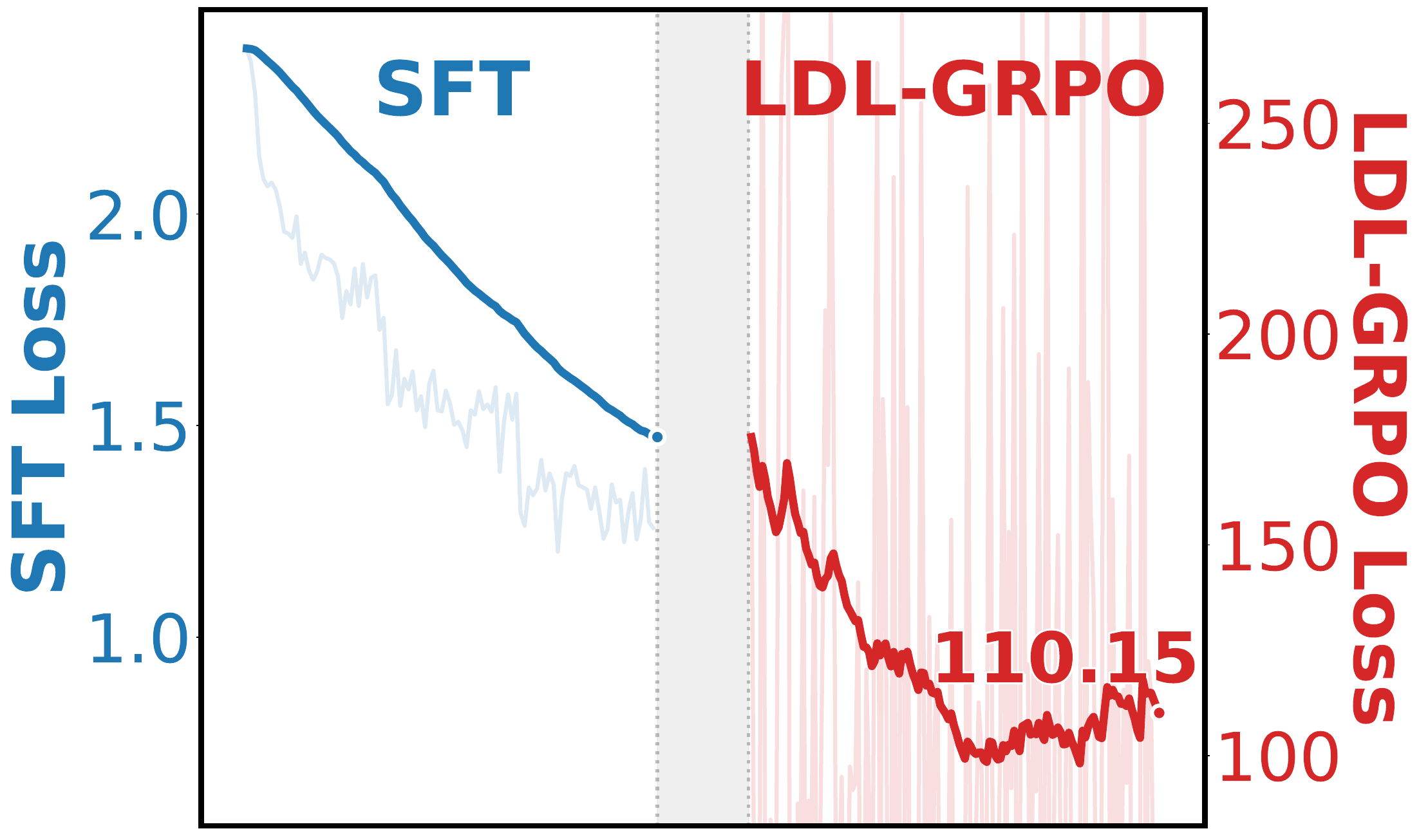}
    \vspace{-2mm}
    {\small CFCW (Qwen2.5-7B)}
  \end{minipage}\hfill
  \begin{minipage}[t]{0.48\linewidth}
    \centering
    \includegraphics[width=\linewidth]{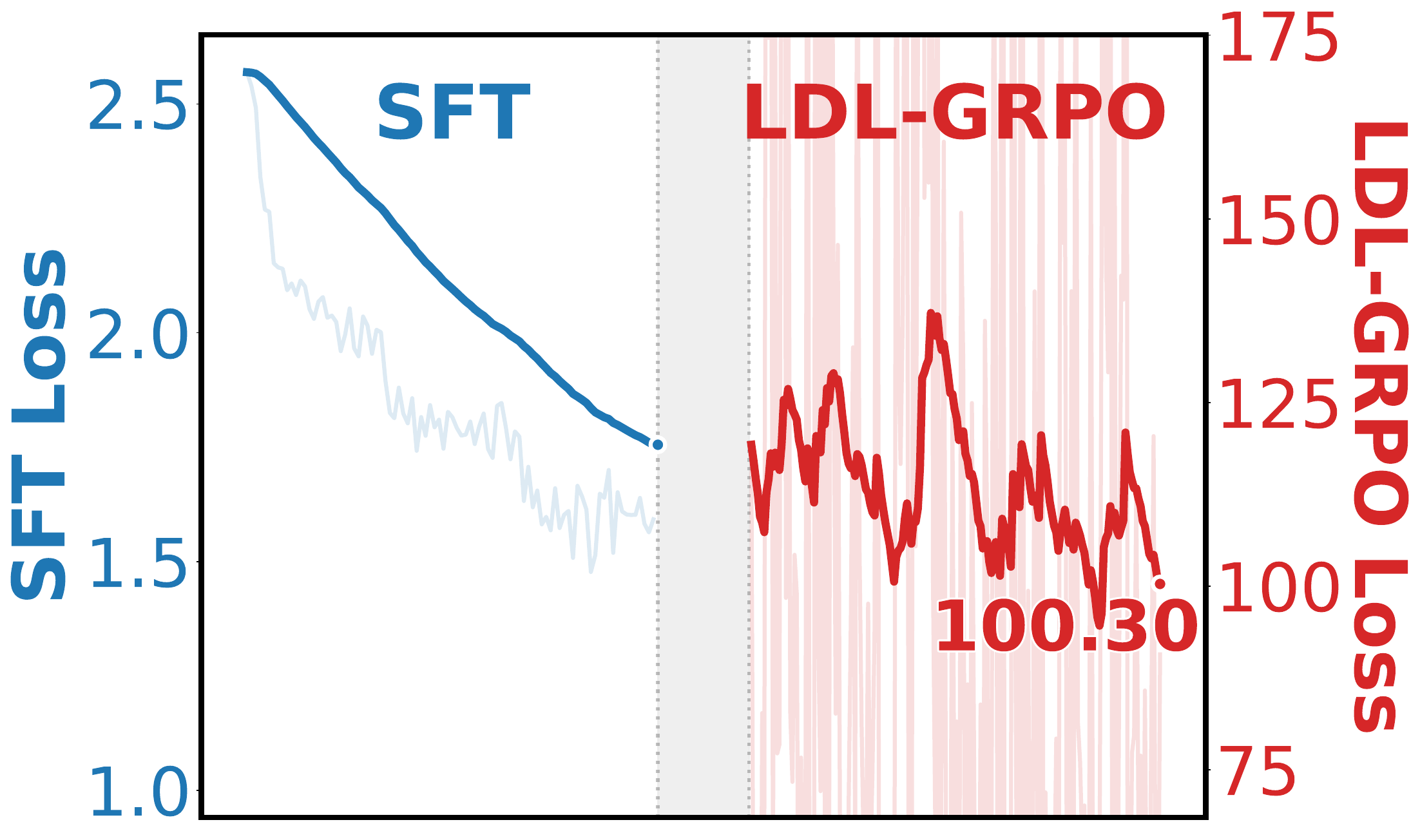}
    \vspace{-2mm}
    {\small PBFG (Qwen2.5-7B)}
  \end{minipage}

  \vspace{2mm}

  \begin{minipage}[t]{0.48\linewidth}
    \centering
    \includegraphics[width=\linewidth]{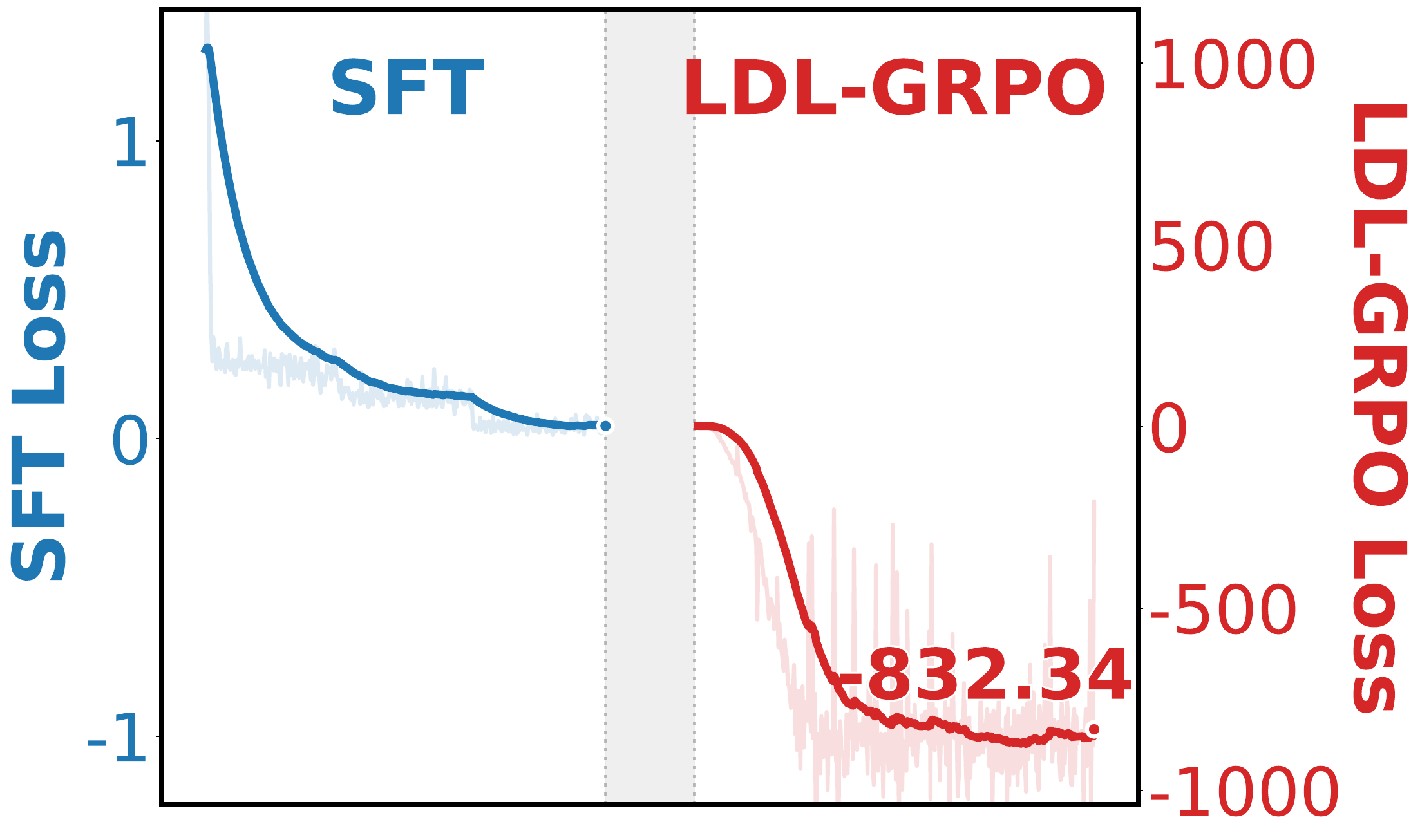}
    \vspace{-2mm}
    {\small MC (Qwen2.5-VL-7B)}
  \end{minipage}\hfill
  \begin{minipage}[t]{0.48\linewidth}
    \centering
    \includegraphics[width=\linewidth]{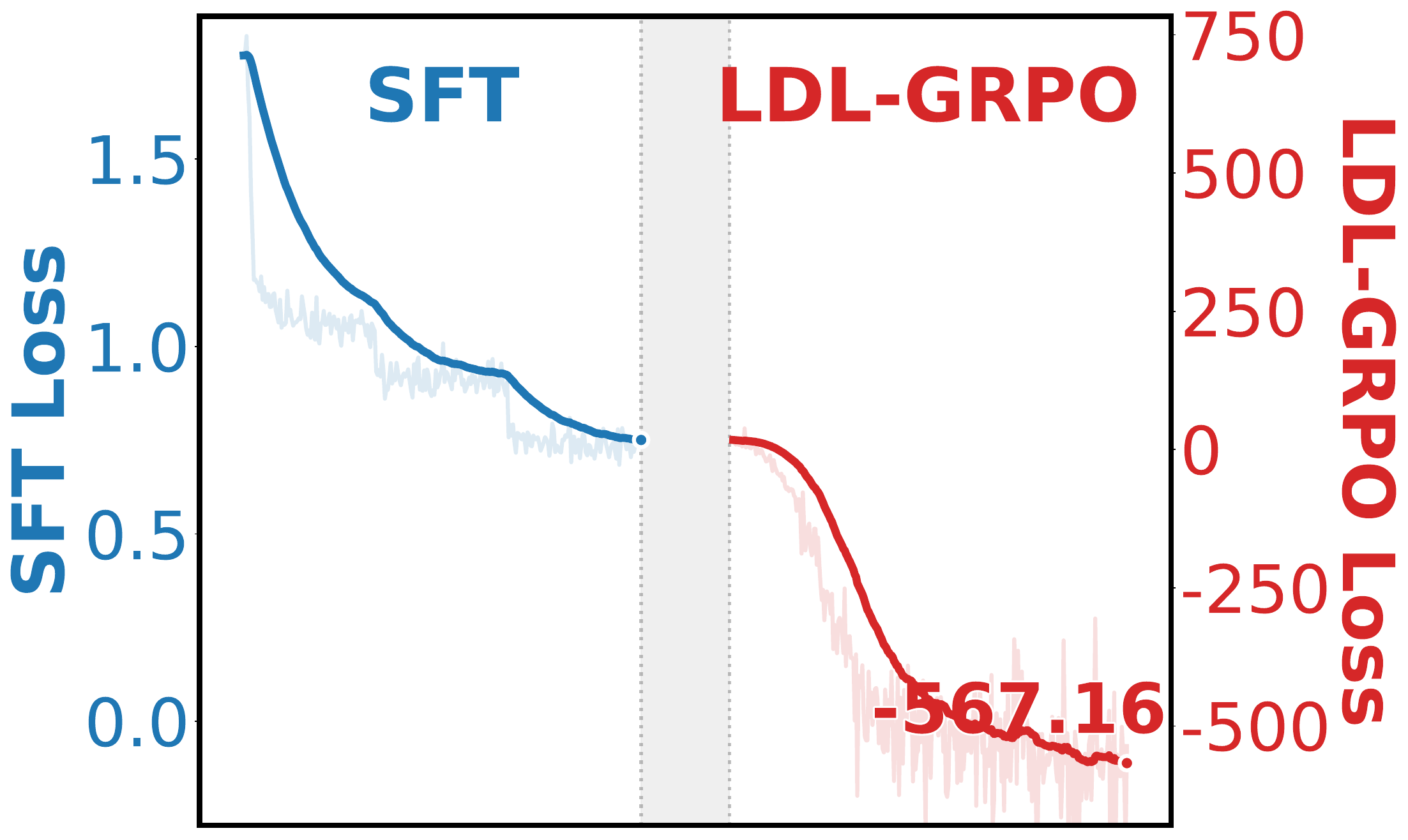}
    \vspace{-2mm}
    {\small RG (Qwen2.5-VL-7B)}
  \end{minipage}

  \caption{
  \textbf{Two-stage convergence from SFT to LDL-GRPO across models and tasks.}
  Each panel shows one model--task pair, illustrating a stable transition from SFT to LDL-GRPO.
  }
  \label{fig:convergence_supp2}
\end{figure}

\begin{figure*}[t]
  \centering
  \setlength{\tabcolsep}{2pt}
  \renewcommand{\arraystretch}{0.2}

  \begin{tabular}{cccc}
    \includegraphics[width=0.245\textwidth]{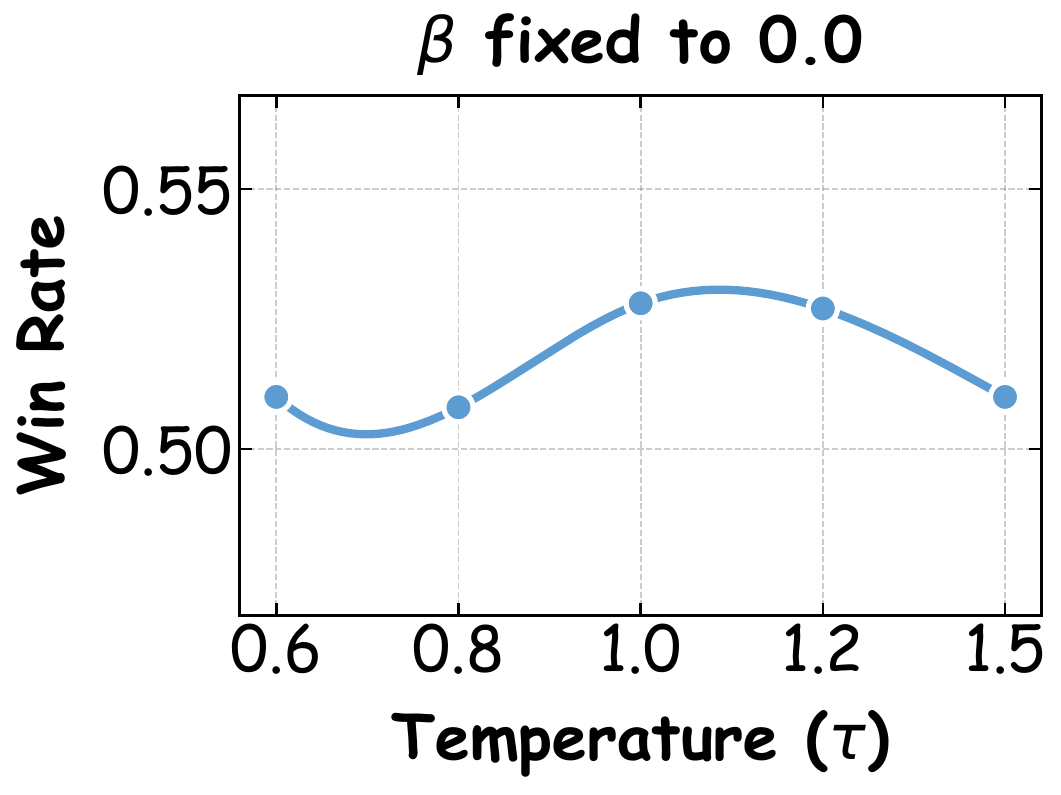} &
    \includegraphics[width=0.245\textwidth]{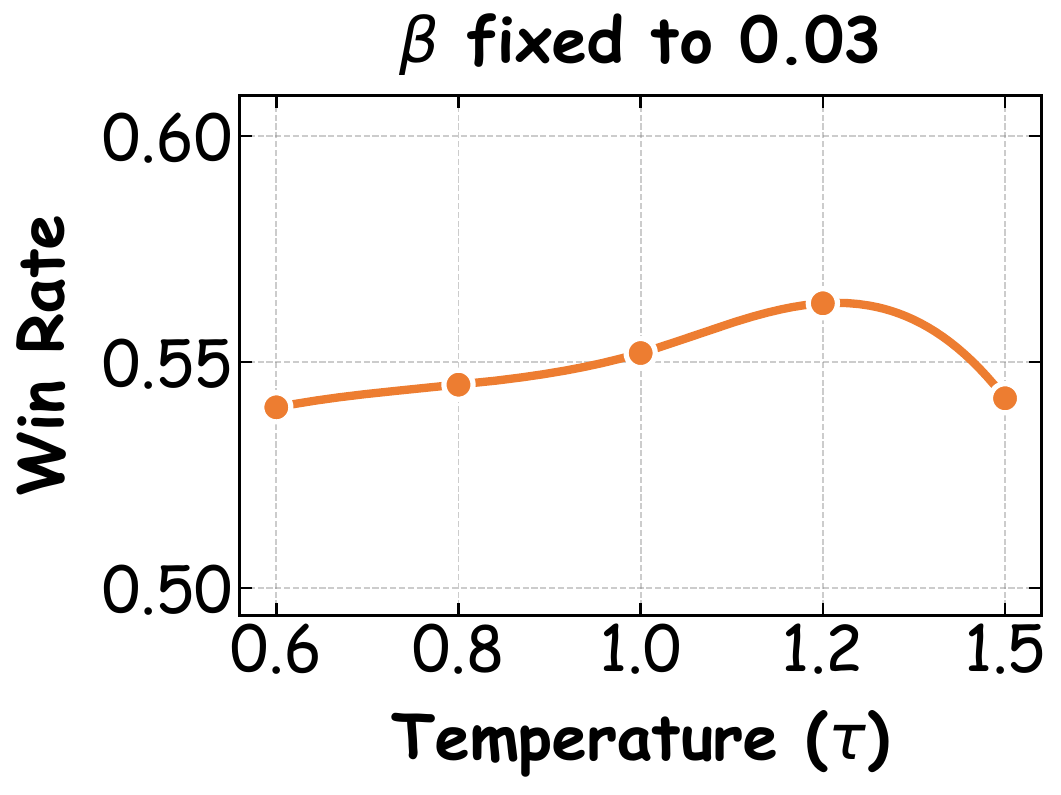} &
    \includegraphics[width=0.245\textwidth]{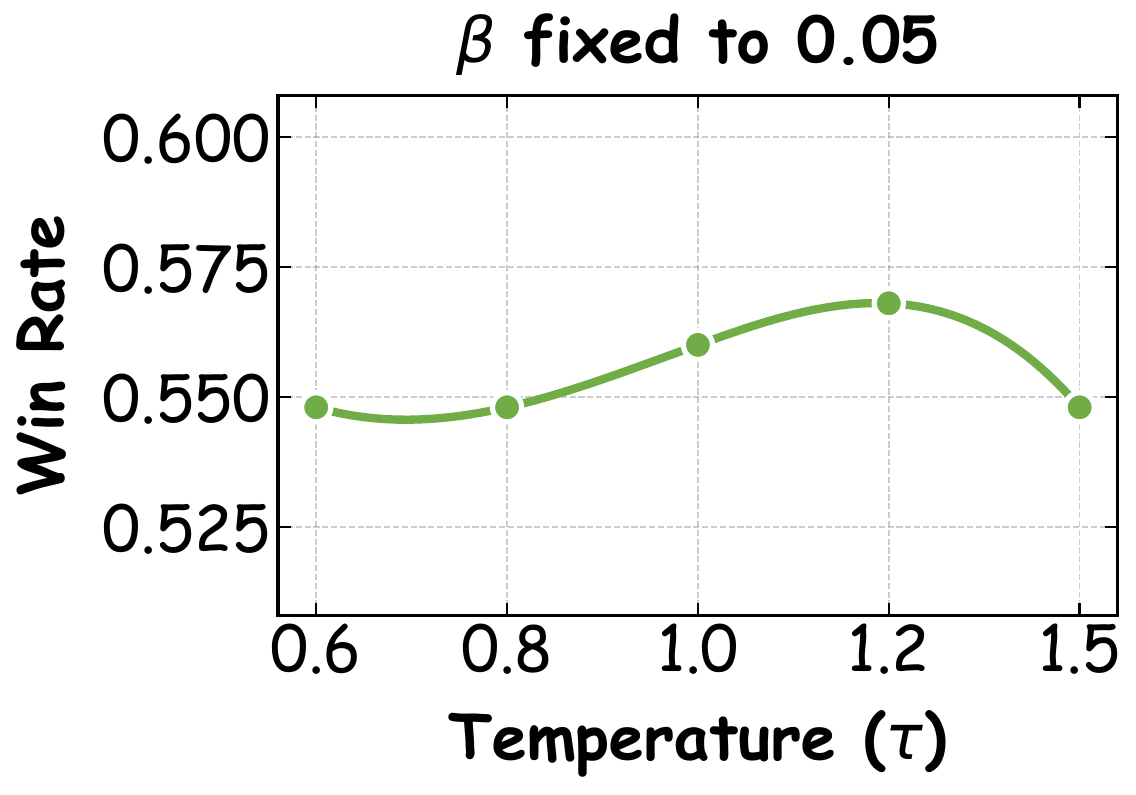} &
    \includegraphics[width=0.245\textwidth]{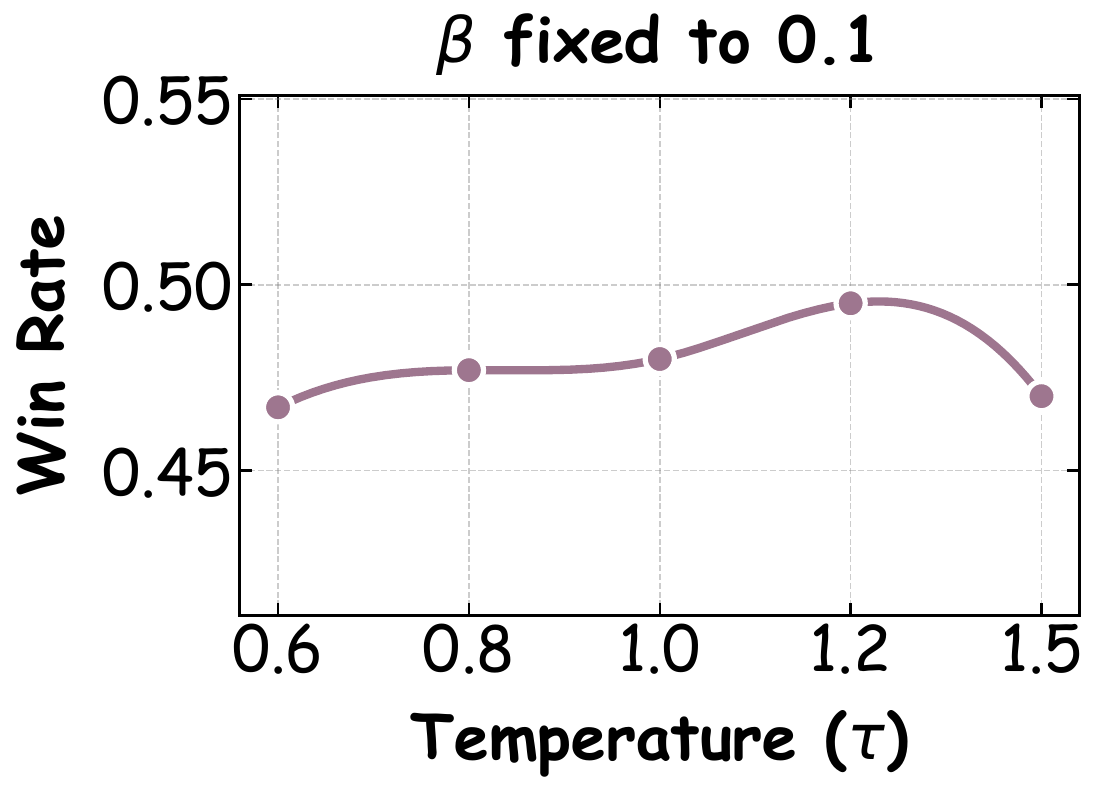}
  \end{tabular}

  \vspace{-2mm}

  \begin{tabular}{cccc}
    \includegraphics[width=0.245\textwidth]{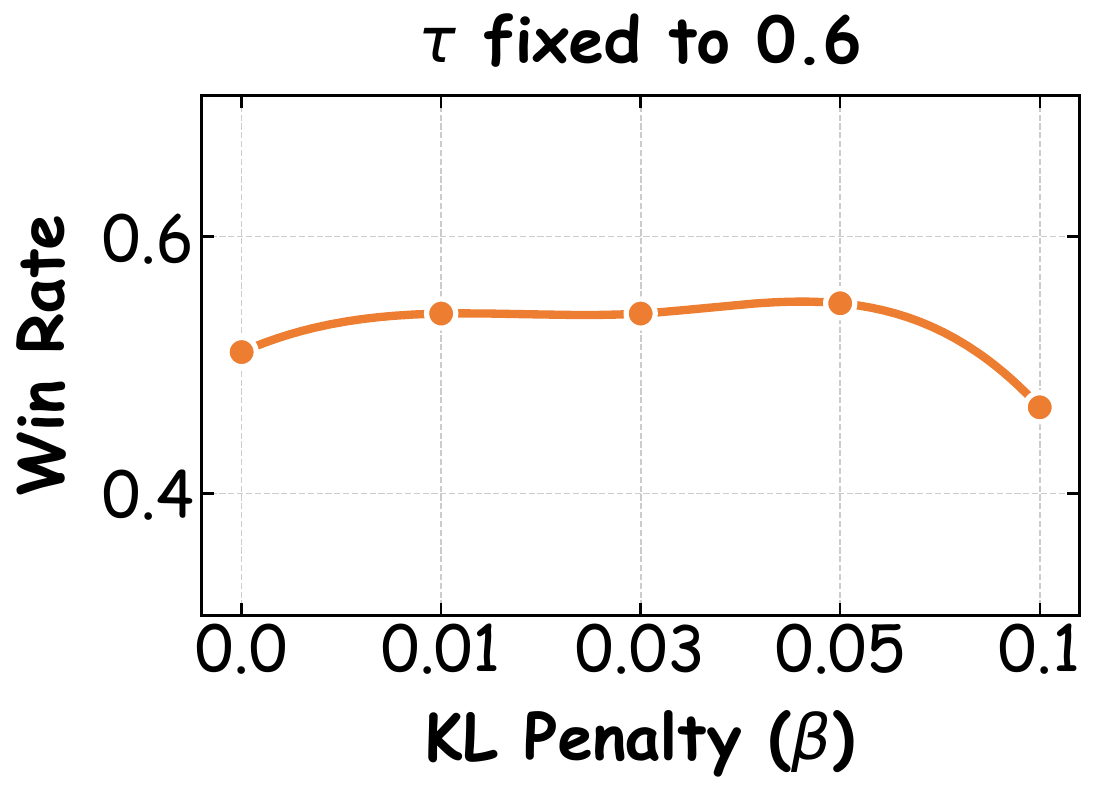} &
    \includegraphics[width=0.245\textwidth]{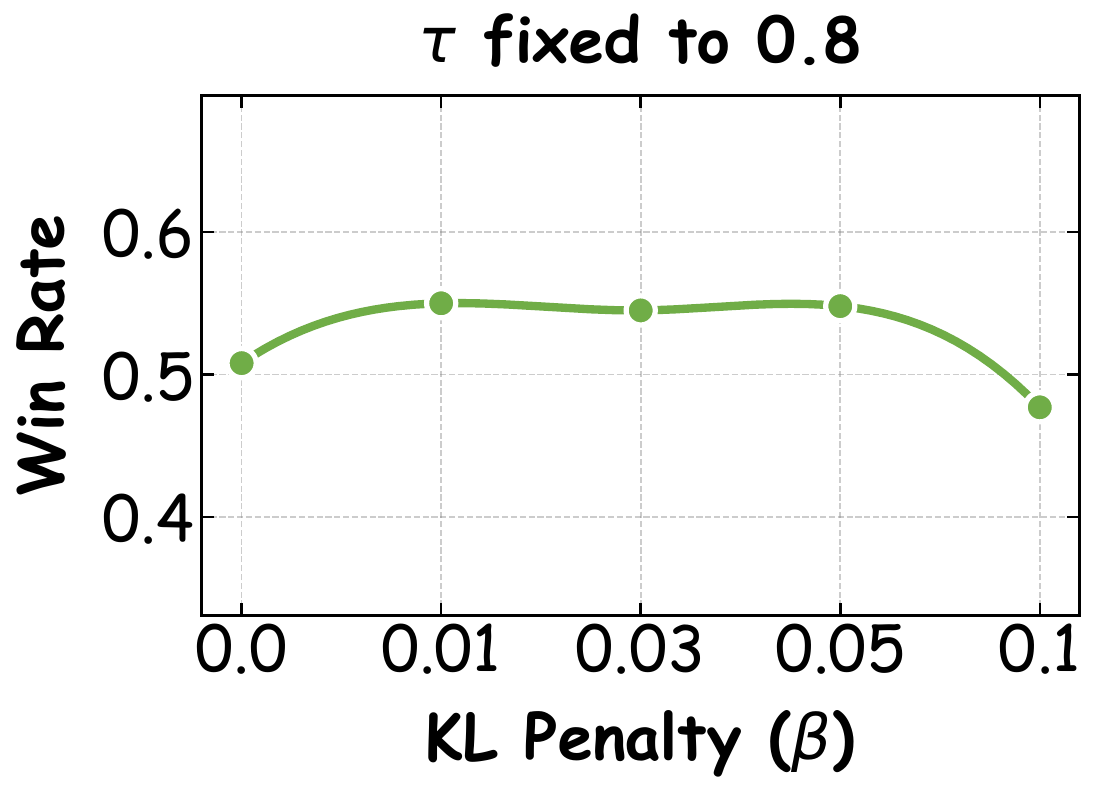} &
    \includegraphics[width=0.245\textwidth]{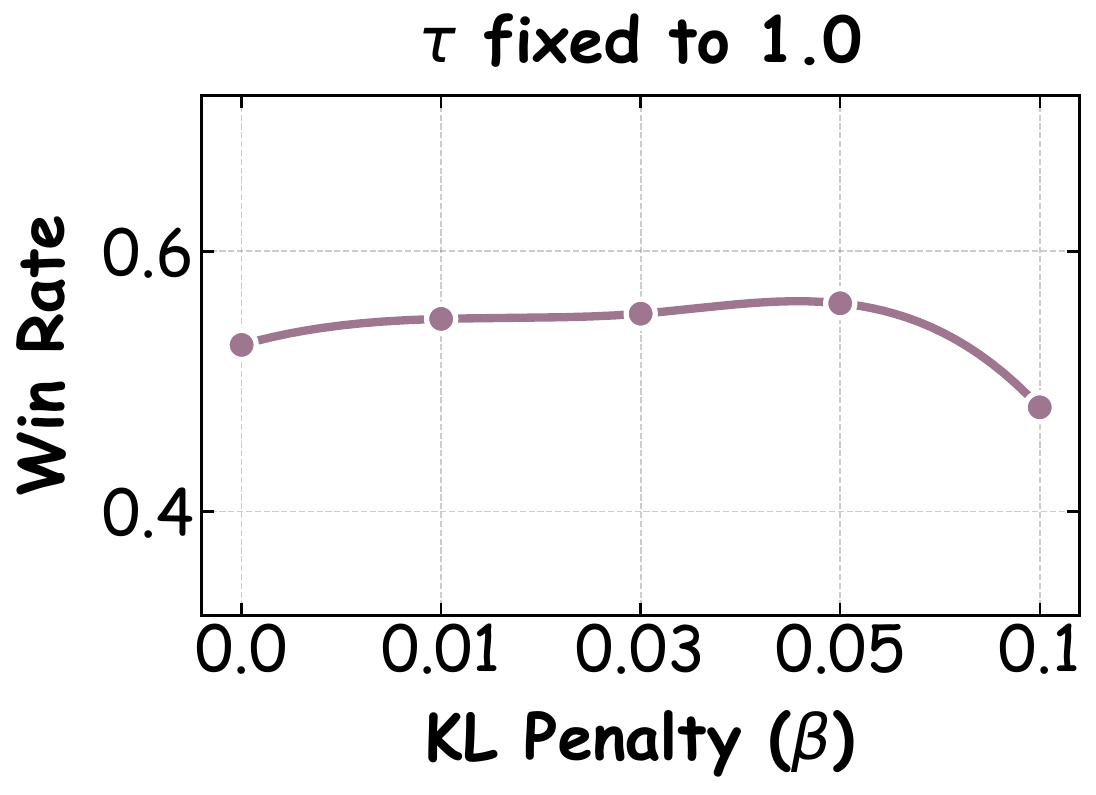} &
    \includegraphics[width=0.245\textwidth]{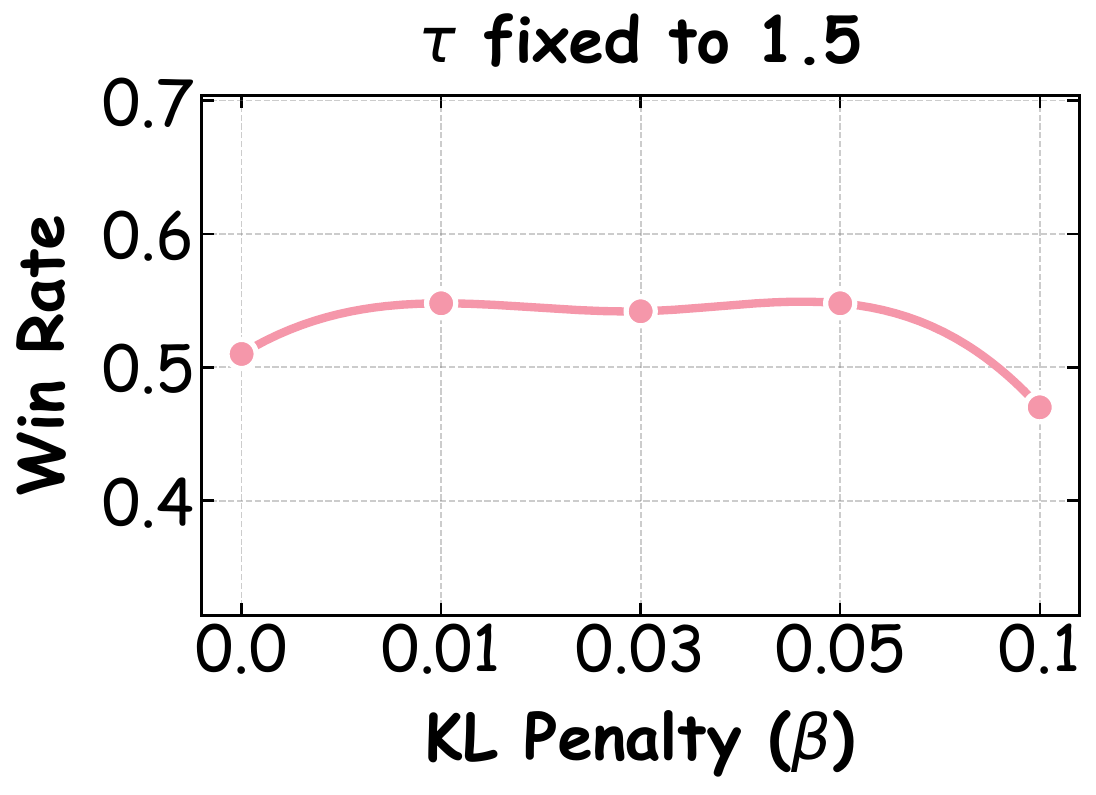}
  \end{tabular}

  \caption{\textbf{Compact sensitivity plots on CountProb (LLaMA3-8B).}
  Top: fixed-$\beta$ sweeps; bottom: fixed-$\tau$ sweeps.}
  \label{fig:app_sensitivity_llama_compact1}
\end{figure*}

\begin{figure*}[t]
  \centering
  \setlength{\tabcolsep}{2pt}
  \renewcommand{\arraystretch}{0.2}

  \begin{tabular}{cccc}
    \includegraphics[width=0.245\textwidth]{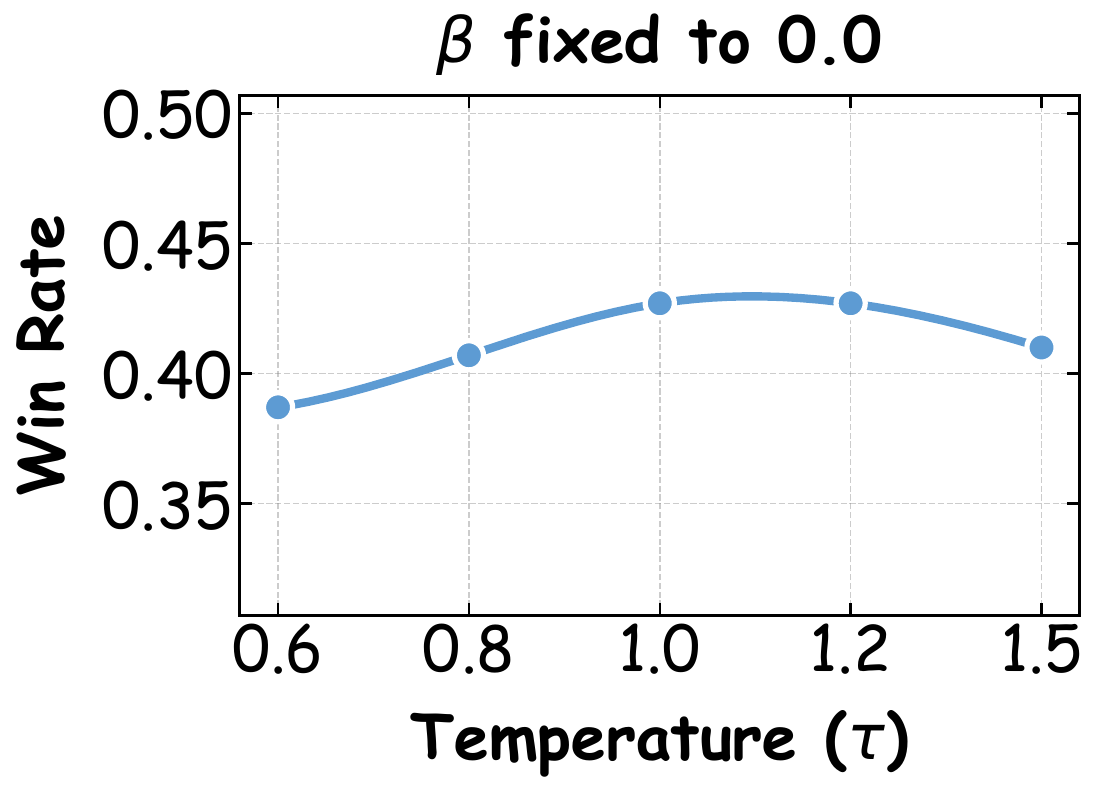} &
    \includegraphics[width=0.245\textwidth]{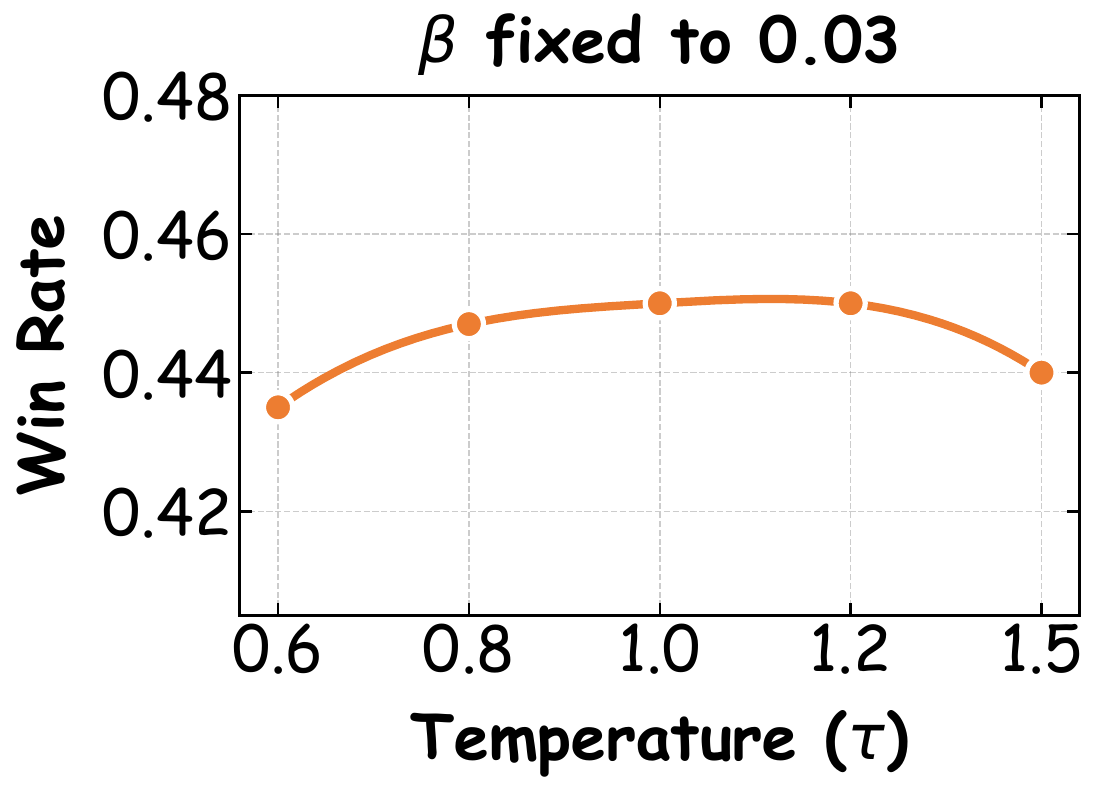} &
    \includegraphics[width=0.245\textwidth]{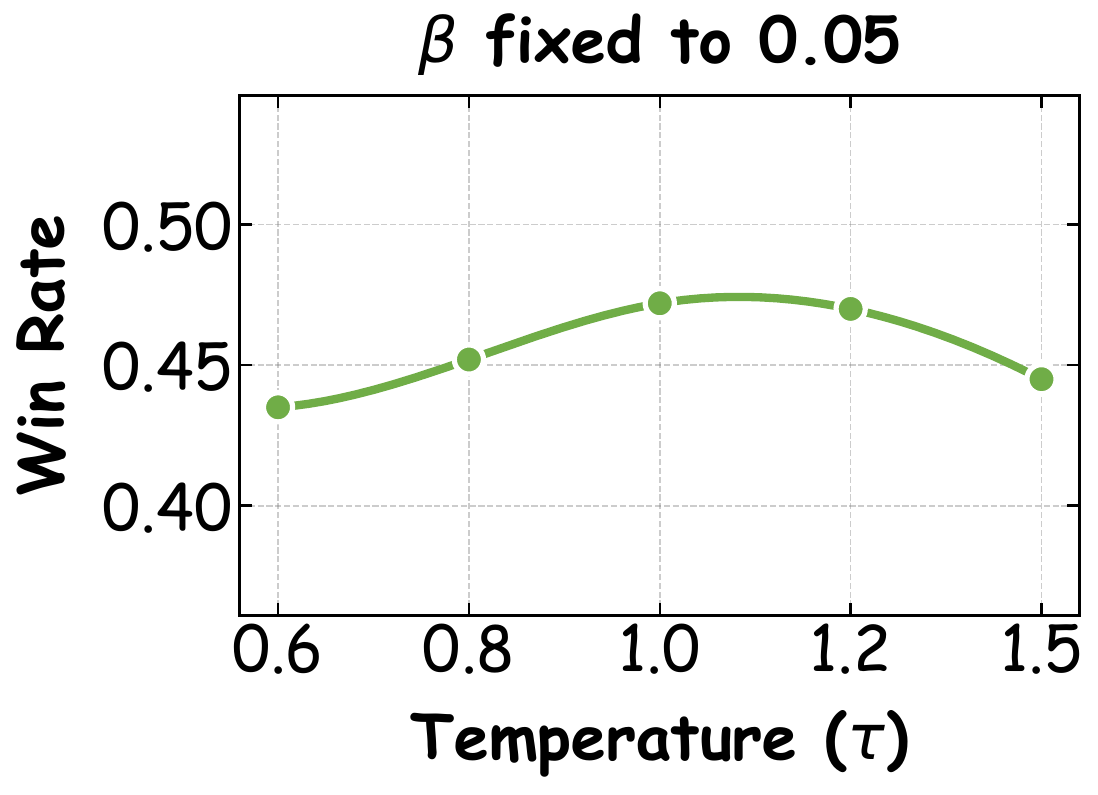} &
    \includegraphics[width=0.245\textwidth]{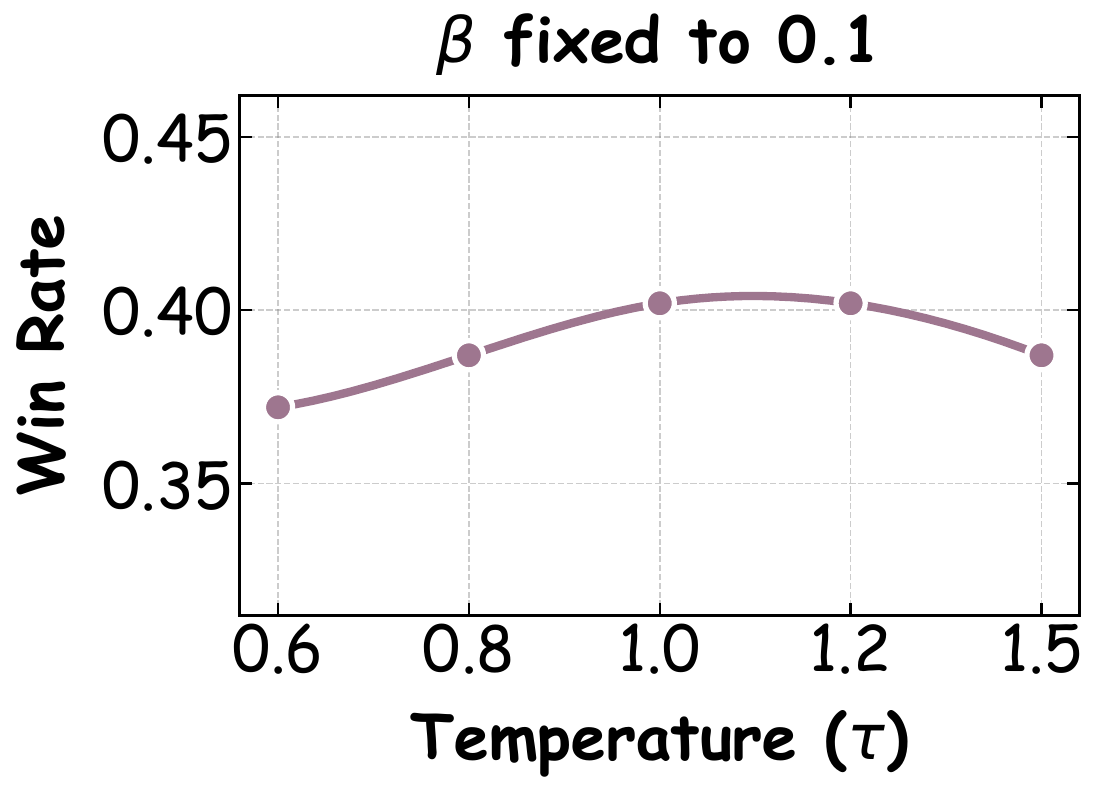}
  \end{tabular}

  \vspace{-2mm}

  \begin{tabular}{cccc}
    \includegraphics[width=0.245\textwidth]{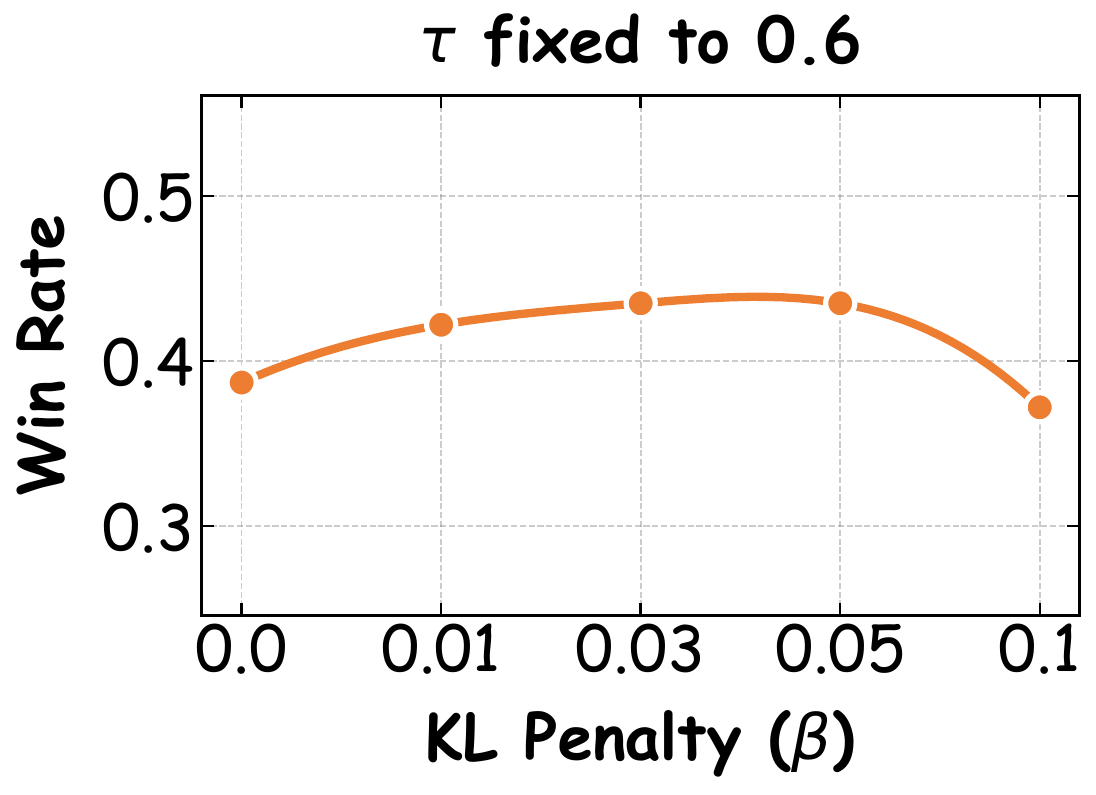} &
    \includegraphics[width=0.245\textwidth]{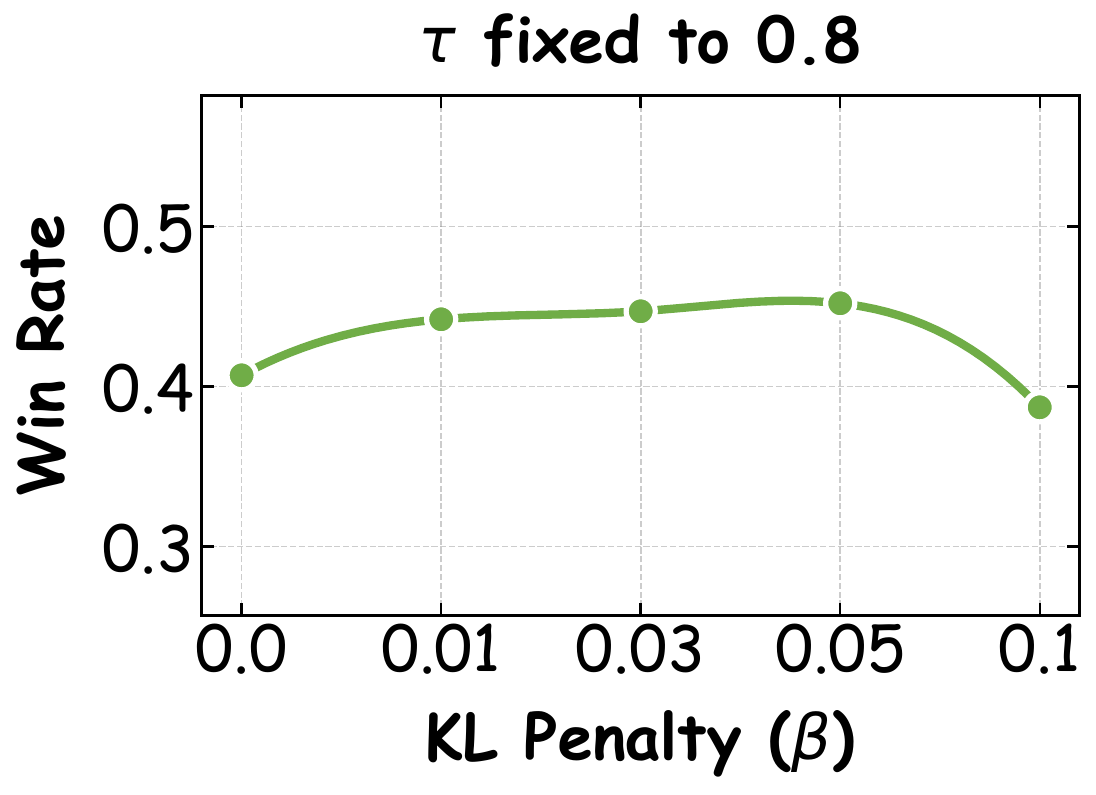} &
    \includegraphics[width=0.245\textwidth]{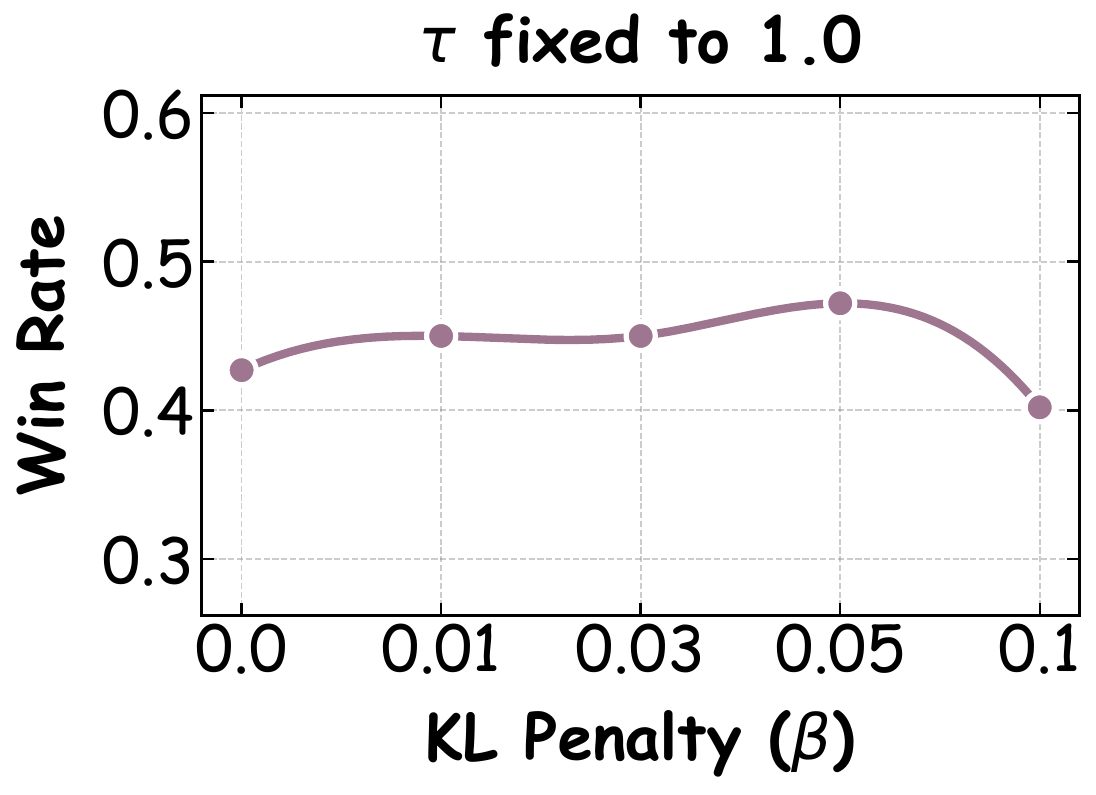} &
    \includegraphics[width=0.245\textwidth]{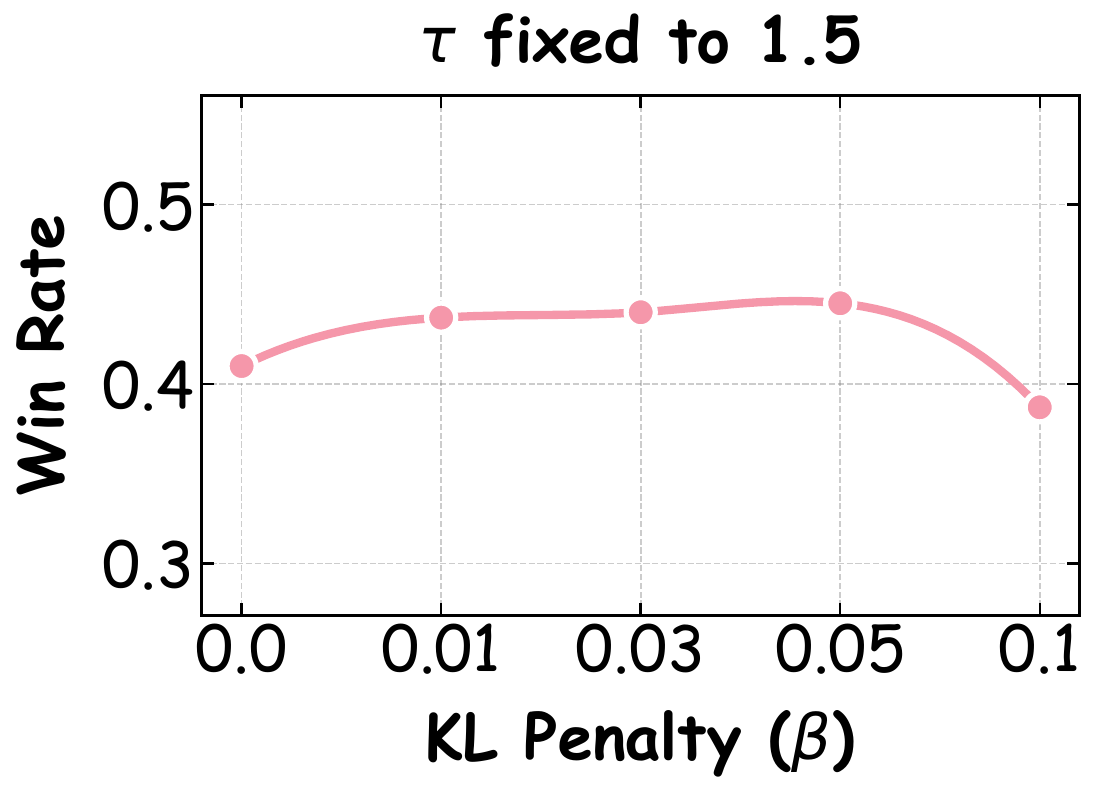}
  \end{tabular}

  \caption{\textbf{Compact sensitivity plots on A-OKVQA-RG (LLaVA-7B).}
  Top: fixed-$\beta$ sweeps; bottom: fixed-$\tau$ sweeps.}
  \label{fig:app_sensitivity_llava_compact2}
\end{figure*}

\section{Limitation}\label{sec:limitation}
While our method enables a fully local and low-cost alignment pipeline under strict on-premise constraints, it still relies on several design choices that need further investigation. First, the quality of the induced preference signal depends on the student’s anchor-conditioned self-evaluation, which may be imperfect when the student is still relatively weak or when the task requires fine-grained judgment. Although the proposed PU formulation and label-distribution learning help mitigate noise and instability, exploring more robust self-evaluation mechanisms remains an interesting direction. Second, our current implementation requires sampling multiple candidates per prompt to construct group-level supervision, which introduces additional computation compared to single-response updates. Improving the efficiency of candidate generation and reuse, or reducing the required group size without hurting stability, is left for future work.

\section{Reproducibility}\label{sec:repro}
We provide implementation details, involving illustrative algorithm descriptions in Section~\ref{sec:method}, Section~\ref{sec:exp}, and Appendix~\ref{sec:supp_exp}, and pseudo-code in Algorithm~\ref{a1}. The code will be publicly released for reproducibility. 

\section{Use of LLMs in Writing}\label{sec:llm_use}
We used a large language model (LLM) solely to polish the writing and correct grammatical issues during the preparation of this paper. The LLM was not involved in idea generation, experiment design, or analysis. All scientific contributions are entirely made by the authors.

\section{Supplementary Results}
\label{buchongjieguo}

\paragraph{Two-stage convergence from SFT to LDL-GRPO.}
Figures~\ref{fig:convergence_supp1} and \ref{fig:convergence_supp2} visualize the training dynamics of our two-stage pipeline on
representative model--task pairs.
In each panel, the \textit{SFT stage} (blue; left y-axis) exhibits a smooth
decrease of the supervised objective, indicating stable imitation learning.
After switching to \textit{LDL-GRPO} (red; right y-axis; gray band marks the
transition), the \textit{LDL-GRPO} objective consistently drops and then stabilizes,
demonstrating that anchor-induced group supervision yields a well-behaved
preference-optimization signal. Importantly, we observe no abrupt divergence
or oscillation across both unimodal writing tasks (\textit{e.g.}, CountProb,
Geometry, CFCW, PBFG) and multimodal benchmarks
(\textit{e.g.}, MC, RG), and the same stable trend holds for
different backbones (\textit{e.g.}, LLaMA3-8B, Qwen2.5-7B,
LLaVA/Qwen2.5-VL).

\paragraph{Compact sensitivity plots.}
Figures~\ref{fig:app_sensitivity_llama_compact1} and \ref{fig:app_sensitivity_llava_compact2} provide compact sweeps over the sampling temperature $\tau$
and KL penalty $\beta$ on CountProb and A-OKVQA-RG,
respectively. Across fixed-$\beta$ sweeps, the win rate remains largely flat
over a wide range of $\tau$, suggesting that our anchor-guided training is not
overly sensitive to sampling stochasticity. Across fixed-$\tau$ sweeps, small
to moderate $\beta$ values achieve comparable performance, whereas overly
large $\beta$ tends to reduce win rate, consistent with over-regularization
that restricts policy improvement. These results support that LDL-GRPO has a
broadly stable operating region for hyperparameters.

\end{document}